\theoremstyle{plain}
\newtheorem{theorem}{Theorem}[]
\newtheorem{proposition}[theorem]{Proposition}
\newtheorem{definition}{Definition}[]
\newcommand{\R}{\mathbb{R}}
\newcommand{\X}{\mathcal{X}}
\newcommand{\V}{\mathcal{V}}
\newcommand{\W}{\mathcal{W}}
\newcommand{\bs}{\mathbf{s}}
\newcommand{\black}{\color{black}}
\definecolor{lightpink}{rgb}{1,0.9,0.9}
\definecolor{lightblue}{rgb}{0.8,0.9,1.0}
\newcolumntype{C}[1]{>{\centering\arraybackslash}p{#1}}
\definecolor{oatpink}{HTML}{FDE2E4}
\definecolor{mypurple}{HTML}{4A148C}
\title{\LARGE OAT-FM: Optimal Acceleration Transport for Improved Flow Matching}
\author[1]{Angxiao Yue\thanks{Equal contribution.}}
\author[2]{Anqi Dong\samethanks[1]}
\author[1, 3, 4]{Hongteng Xu\thanks{Corresponding author. Email: hongtengxu@ruc.edu.cn}}
\affiliation[1]{Gaoling School of Artificial Intelligence, Renmin University of China}
\affiliation[2]{Division of Decision and Control Systems and Department of Mathematics, KTH Royal Institute of Technology}
\affiliation[3]{Beijing Key Laboratory of Research on Large Models and Intelligent Governance}
\affiliation[4]{Engineering Research Center of Next-Generation Intelligent Search and Recommendation, MOE}
\begin{document}
\abstract{As a powerful technique in generative modeling, Flow Matching (FM) aims to learn velocity fields from noise to data, which is often explained and implemented as solving Optimal Transport (OT) problems. 
In this study, we bridge FM and the recent theory of Optimal Acceleration Transport (OAT), developing an improved FM method called OAT-FM and exploring its benefits in both theory and practice. 
In particular, we demonstrate that the straightening objective hidden in existing OT-based FM methods is mathematically equivalent to minimizing the physical action associated with acceleration defined by OAT.  
Accordingly, instead of enforcing constant velocity, OAT-FM optimizes the acceleration transport in the product space of sample and velocity, whose objective corresponds to a necessary and sufficient condition of flow straightness.
An efficient algorithm is designed to achieve OAT-FM with low complexity.
OAT-FM motivates a new two-phase FM paradigm: 
Given a generative model trained by an arbitrary FM method, whose velocity information has been relatively reliable, we can fine-tune and improve it via OAT-FM.
This paradigm eliminates the risk of data distribution drift and the need to generate a large number of noise data pairs, which consistently improves model performance in various generative tasks. Code: \hypersetup{urlcolor=mypurple}\href{https://github.com/AngxiaoYue/OAT-FM}{https://github.com/AngxiaoYue/OAT-FM}.}

\maketitle

\begin{figure}[h!]
    \centering
    \subfigure[A two-phase FM paradigm driven by OAT-FM]{
        \includegraphics[height=6.5cm]{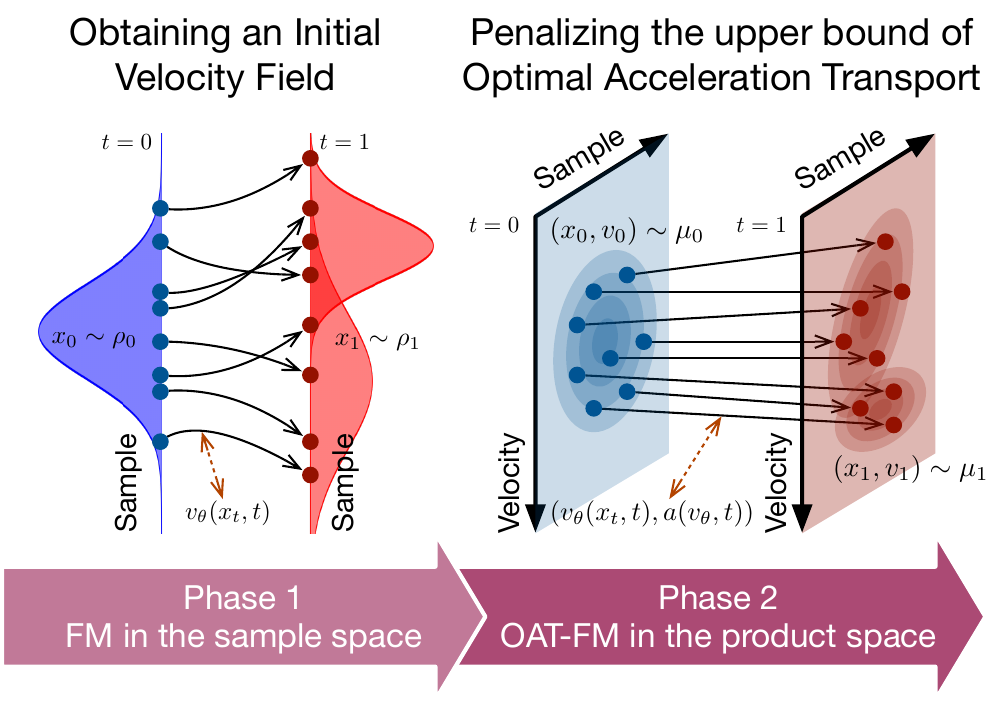}\label{fig:scheme}
    }
    \subfigure[Representative generation results]{
        \includegraphics[height=6.5cm]{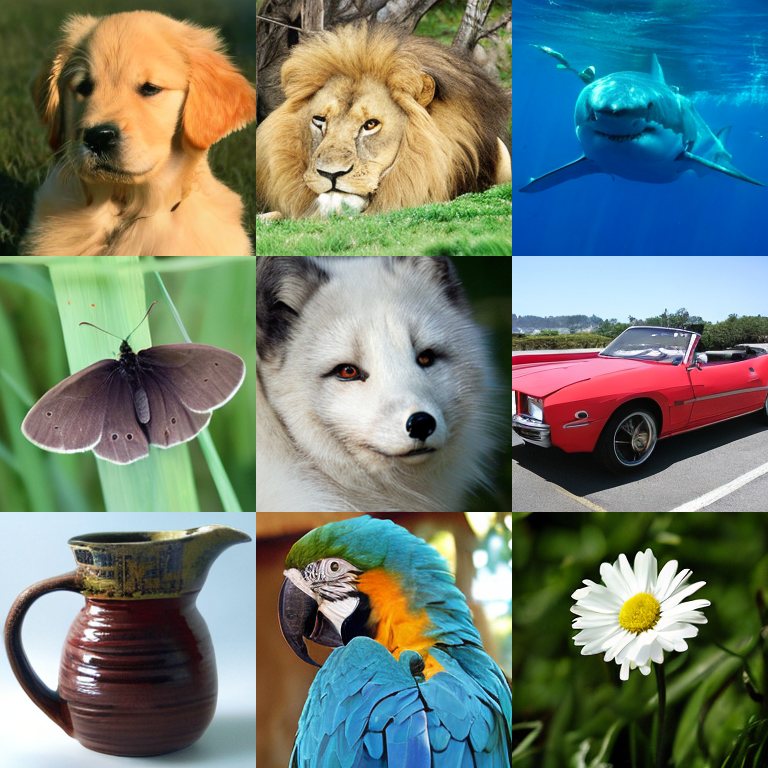}\label{fig:train_efficiency}
    }
   \caption{(a) The principle of OAT-FM and the corresponding two-phase FM paradigm. (b) We train SiT-XL~\citep{ma2024sit} by our two-phase FM paradigm on the ImageNet 256$\times$256 dataset~\citep{deng2009imagenet}.}
    \label{fig:main_figure}
\end{figure}

\section{Introduction}
As a promising generative modeling strategy, Flow Matching (FM)~\citep{Song2021ScoreBased,Ho2020Denoising,lipman2023flow} aims to learn a (deterministic or stochastic) velocity field capturing the transport of probability mass from a prior noise distribution to a complex data distribution. 
In general, the learned velocity field corresponds to a neural solver of a specific ordinary or stochastic differential equation.  Therefore, in the inference phase, we can simulate the differential equation with discrete sampling steps, resulting in the flows from noise to data. 
Nowadays, flow matching has achieved competitive performance in various challenging generative tasks, e.g., image generation~\citep{lipman2023flow,esser2024scaling}, audio generation~\citep{liu2024generative, wang2024frieren}, protein design~\citep{bose2024se,yue2025reqflow}, and so on.

Currently, some \textit{two-phase FM paradigms} are proposed to achieve high-quality generation efficiently.
Typically, given a well-trained flow/diffusion model, Rectified Flow (ReFlow)~\citep{Liu2022RectifiedFlow} and its variants~\citep{lee2024improving,hu2025improving} iteratively refit each learned flow trajectory to its linear or piecewise linear approximation. 
Such a so-called rectification phase straightens the flow progressively and thus allows a large sampling step size (and thus requires few steps) during inference. 
In addition, self-distillation methods, like Consistency Distillation (CD)~\citep{song2023consistency,yang2024consistency}, treat the given model as a teacher and learn an efficient student model to fit the flow trajectories created by the teacher. 
Essentially, both these two-phase FM paradigms utilize the velocity field of the given model in their phase-2 training, enhancing model efficiency while preventing severe performance degradation. 

In this study, we propose a novel improved FM method, called OAT-FM, based on the recent theory of Optimal Acceleration Transport (OAT)~\citep{Chen2018MeasureValued,Brigati2025KineticOT}, which provides a new way to leverage velocity information and leads to a new two-phase FM paradigm.
As illustrated in Figure~\ref{fig:scheme}, OAT-FM minimizes the acceleration transport between the noise distribution and the data one defined in the product space of samples and their velocities. 
The implementation of OAT-FM corresponds to a bi-level optimization problem.
In the lower-level, we solve an OAT problem, leveraging the endpoint velocity of flow to compute the Optimal Transport (OT) plan (or called coupling~\citep{villani2021topics}) defined in the product space.
In the context of FM, we can decompose the coupling and solve the OAT problem efficiently, whose complexity is the same with that of the classic OT problem~\citep{peyre2019computational}.
Accordingly, we can sample noise-data pairs based on the OT plan during training. 
In the upper-level, we minimize an upper bound for the optimal acceleration transport from noise to data, which straightens flow trajectories with a theoretical guarantee.
As shown in Figure~\ref{fig:scheme}, given an arbitrary flow/diffusion model, whose velocity information has been relatively reliable, we can continually refine it via our OAT-FM method and improve its performance.

Different from existing OT-based FM methods (e.g., OT-CFM~\citep{tong2024improving,pooladian2023multisample}, OFM~\citep{kornilov2024optimal}, and ReFlow~\citep{liu2022rectified,hertrich2025relation}) and recent acceleration-driven FM methods~\citep{chen2025high,chen2025nrflow,cao2025towards,gong2025theoretical}, OAT-FM applies a physically grounded objective tied directly to acceleration control and transport, whose straightening objective is mathematically equivalent to minimizing the physical action associated with acceleration defined by OAT.  
Compared with existing two-phase FM paradigms, such as ReFlow and CD, the paradigm based on OAT-FM neither requires generating a large number of paired training data nor relies on dense intermediate interpolation results, which eliminates the risk of data distribution drift. 
To our knowledge, OAT-FM makes the first attempt to explore the usefulness of acceleration-driven FM in high-dimensional generative tasks. As shown in Figure~\ref{fig:train_efficiency}, the phase-2 training achieved by OAT-FM leads to promising high-resolution image generation results. 

\section{Proposed Method}\label{sec:prelim}

\subsection{Optimal Transport-based Flow Matching}

Most existing FM methods fall into a generalized Conditional Flow Matching (CFM) framework~\citep{tong2024improving}.
Denote $\mathbb{P}(\mathcal{X})$ as the set of probability measures defined in a sample space $\mathcal{X}$.
Suppose that we have a data distribution $\rho_1\in\mathbb{P}(\mathcal{X})$ and a noise one $\rho_0\in\mathbb{P}(\mathcal{X})$, respectively. 
Typically, we set $\rho_0$ as a normal distribution $\mathcal{N}(0,1)$.
CFM models the evolutionary sample distribution from $\rho_0$ to $\rho_1$ conditioned on an auxiliary variable $z$, called conditional path and denoted as $p_t(x|z)$ (with $t\in [0,1]$). 
It learns a neural network, denoted as $v_{\theta}$, to fit the velocity field $v_t(x|z)$ corresponding to $p_t(x|z)$, i.e., 
\begin{eqnarray}\label{eq:cfm}
\begin{aligned}
\sideset{}{_\theta}\min \ \ \mathbb{E}_{z\sim\pi,~t\sim\text{Unif}[0,1],~x \sim p_t(\cdot|z)} [\| v_\theta(x,t) - v_t(x|z) \|^2 ],
\end{aligned}
\end{eqnarray}
where $\pi$ denotes the distribution of $z$.
Once trained, the model generates new data from random noise by integrating parametrized velocities over time, i.e., $\hat x_1 = g_\theta(x_0) = x_0 + \int_0^1 v_\theta(x_t, t) \mathrm{d}t$, where $x_0\sim\rho_0$.
In practice, this generation process can be implemented by discrete Euler steps: Given the current $x_t$, we set a step size $\Delta t$, obtain $x_{t+\Delta t}=x_t+\Delta t \cdot v_{\theta}(x_t, t)$ and update the timestamp by $t\leftarrow t+\Delta t$.
Repeating the above step till $t=1$ leads to the generation result.

In the CFM framework, the distribution $\pi$ and the conditional path $p_t(x|z)$ play a central role, and different implementations result in various FM methods~\citep{lim2024elucidating,tong2024improving}. 
For example, the early FM method in~\citep{lipman2023flow} sets $\pi=\rho_1$ and $p_t(x|z)$ as a Gaussian distribution $\mathcal{N}\big(tz, (t\sigma -t+1)^2\big)$. 
The independent coupling CFM (I-CFM) sets $\pi=\rho_0\times\rho_1$, i.e., $z=(x_0,x_1)$ with $x_0\sim\rho_0$ and $x_1\sim\rho_1$ independently. 
Recently, some attempts have been made to interpret FM through the lens of optimal transport~\citep{villani2021topics}, leading to a series of optimal transport-based FM methods, e.g., OT-CFM~\citep{tong2024improving,pooladian2023multisample} OFM~\citep{kornilov2024optimal}, and kinetic FM~\citep{shaul2023kinetic,shaul2025flow}.

\textbf{Optimal Transport-based CFM (OT-CFM):} 
Denote the Wasserstein-2 distance between $\rho_0$ and $\rho_1$ as $\W_2(\rho_0,\rho_1)$.
The dynamic (Benamou–Brenier) formulation of optimal transport~\citep{BenamouBrenier2000} seeks a unique least–kinetic–energy flow $v$ corresponding to $\W_2^2(\rho_0,\rho_1)$, i.e., 
\begin{eqnarray}\label{eq:bb_ot}
\W_2^2(\rho_0,\rho_1)
= \min_{\rho,\,v} \int_0^1 \int_{\X} \frac{1}{2} \rho(x,t) \|v(x,t)\|_2^2 \mathrm{d}x \mathrm{d}t,
\end{eqnarray}
subject to the continuity equation $\partial_t \rho + \nabla_x \cdot \big( v \rho \big) = 0$ with boundary conditions $\rho(\cdot,0)=\rho_0$ and $\rho(\cdot,1)=\rho_1$. 
Accordingly, OT-CFM implements CFM by setting the distribution $\pi$ in~\eqref{eq:cfm} as the OT plan corresponding to $\W^2_2(\rho_0,\rho_1)$, leading to the following bi-level optimization problem:
\begin{eqnarray}\label{eq:otcfm}
\begin{aligned}
    \min_\theta \overbrace{\mathbb{E}_{(x_0,x_1)\sim\pi^*,~t\sim\text{Unif}[0,1]}[\|v_\theta(x_t,t)-(x_1-x_0)\|^2]}^{\text{Upper-level:}~\mathcal{L}_{\text{CFM}}},~~\text{s.t.}~\pi^\ast = \overbrace{\mathop{\arg\min}_{\pi \in \Pi(\rho_0,\rho_1)} \mathbb{E}_{\pi}[\|x_1 - x_0\|_2^2]}^{\text{Lower-level:}~\mathcal{W}_2^2(\rho_0,\rho_1)},
\end{aligned}
\end{eqnarray}
where $\Pi(\rho_0,\rho_1)$ denotes the set of couplings whose marginal distributions are $\rho_0$ and $\rho_1$, respectively
Note that we can set the conditional path as the deterministic linear interpolation between $x_0$ and $x_1$, i.e., given $z=(x_0,x_1)$,  $p_t(x|z)=\delta_{x_t}$ with $x_t=(1-t)\cdot x_0 + t\cdot \hat{x}_1$.
Accordingly, the velocity $v_t(x|z)$ becomes $x_1-x_0$.

In theory, the objective function in~\eqref{eq:otcfm} regresses $v_\theta(x_t,t)$ to the constant velocity $x_1-x_0$. 
This constant velocity equals the characteristic velocity on the Wasserstein geodesic when $(x_0,x_1)$ are coupled optimally~\citep{Mccann1997ConstantVelocity,dong2024monge}. 
In other words, training with $\pi^\ast$ aligns $v_{\theta}(x,t)$ with the least–kinetic–energy flow in~\eqref{eq:bb_ot}, yielding straighter and more efficient flow trajectories.
However, constant velocity is sufficient but not necessary for straightening flows.
\begin{proposition}\label{prop:straightness}
The trajectory is straight if and only if the velocity direction is time invariant and the acceleration is everywhere parallel to the velocity. 
The classical (first-order) dynamical optimal transport is recovered as the special case with zero acceleration.
\end{proposition}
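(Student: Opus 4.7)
The plan is to establish both directions by decomposing motion into a fixed unit direction and a scalar arc-length progress, and then verify the optimal transport corollary by integration.

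For the forward implication, I would parametrize a straight trajectory $\gamma:[0,1]\to\X$ as $\gamma(t)=x_0+\phi(t)u$, where $u$ is a fixed unit vector and $\phi$ is a $C^2$ scalar function. Differentiating once yields $\dot\gamma(t)=\dot\phi(t)u$, so the velocity direction is $\mathrm{sign}(\dot\phi(t))u$, i.e., time-invariant up to at most sign reversal. Differentiating a second time gives $\ddot\gamma(t)=\ddot\phi(t)u$, and since $\dot\gamma$ is itself a scalar multiple of $u$, the acceleration is everywhere parallel to the velocity. This direction is immediate and essentially bookkeeping.

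For the converse, I would assume that $\dot\gamma(t)/\|\dot\gamma(t)\|\equiv u$ is a constant unit vector wherever defined, and write $\dot\gamma(t)=\alpha(t)u$ for a scalar $\alpha$. Integrating in $t$ gives $\gamma(t)=\gamma(0)+\bigl(\int_0^t\alpha(s)\mathrm{d}s\bigr)u$, which manifestly traces a straight line. I would also note that the acceleration-parallel-to-velocity clause is not logically independent: from $\ddot\gamma=\lambda(t)\dot\gamma$ one gets the vector ODE $\dot v=\lambda v$, whose solution $v(t)=v(0)\exp\!\bigl(\int_0^t\lambda(s)\mathrm{d}s\bigr)$ has a fixed direction. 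Stating the two conditions jointly is what makes the equivalence robust to isolated rest points; away from such points either condition alone suffices.

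For the final sentence, I would specialize to zero acceleration $\ddot\gamma\equiv 0$. Then $\dot\gamma(t)\equiv v$ is constant, so $\gamma(t)=\gamma(0)+tv$, and with endpoints $(x_0,x_1)$ this is exactly McCann's displacement interpolation $x_t=(1-t)x_0+tx_1$. Recognising this as the minimiser in the Benamou--Brenier formulation \eqref{eq:bb_ot} recovers the classical dynamical OT as the zero-acceleration special case.

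The main obstacle I anticipate is the rigorous treatment of instants where the velocity vanishes, because at such times the ``direction of velocity'' is ill-defined and the forward--backward equivalence requires care. My preferred workaround is to take the acceleration-velocity parallelism as the primary hypothesis and extract direction invariance from the scalar linear ODE above; straightness then follows by integration across any rest intervals, since $\gamma$ is simply stationary on them and the line direction $u$ is inherited from the non-trivial segments.
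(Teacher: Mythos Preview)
Your proposal is correct and follows essentially the same approach as the paper: parametrize a straight line as $x_0+\phi(t)u$ for the forward direction, integrate $v(t)=\alpha(t)u$ for the converse, and specialize to $a\equiv 0$ to recover the Benamou--Brenier geodesic. Your added remark that the two hypotheses are not logically independent (each implies the other away from rest points via the scalar ODE $\dot v=\lambda v$) is a correct observation that the paper does not make explicit, and your handling of vanishing-velocity instants is slightly more careful than the paper's brief aside, but neither changes the substance of the argument.
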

Motivated by Proposition~\ref{prop:straightness}, we can move beyond first-order dynamics and minimize acceleration instead, which corresponds to the optimal acceleration transport problem in~\citep{Chen2018MeasureValued, Brigati2025KineticOT} and leads to the proposed OAT-FM method accordingly.

\subsection{Flow Matching Based on Optimal Acceleration Transport}

Given two distributions defined in the product space of sample and velocity, i.e., $\mu_0,\mu_1\in\mathbb{P}(\mathcal{X}\times\mathcal{V})$, the optimal acceleration transport problem evolves a probability measure from $\mu_0$ to $\mu_1$ under deterministic second-order dynamics while minimizing total squared acceleration.\footnote{Obviously, the sample distribution $\rho_t$ is a marginal of $\mu_t$, i.e., $\rho_t(x)=\int_{\mathcal{V}}\mu_t(x,v)\mathrm{d}v$.} 
\begin{definition}[\bf Dynamic Formulation of Optimal Acceleration Transport (OAT)~\citep{benamou2019second}]\label{def:oat1}
Let $\X\subset\R^d$ be the sample space and $\V\subset\R^d$ the velocity space (by default $\V=\R^d$). 
For $\mu_0,\mu_1\in\mathbb{P}(\X\times\V)$, the optimal acceleration transport between them is defined as
\begin{eqnarray}\label{eq:bb_oat}
\mathcal{A}_2^2(\mu_0,\mu_1)
:= \min_{\mu,\,a}
\int_0^1 \int_{\X\times\V} \frac{1}{2}\,\mu(x,v,t)\,\|a(x,v,t)\|_2^2\,\mathrm{d}x\,\mathrm{d}v\,\mathrm{d}t,
\end{eqnarray}
subject to the Vlasov equation~\citep{vlasov1968vibrational} $\partial_t \mu + v \cdot \nabla_x \mu + \nabla_v \cdot \big(a\,\mu\big) = 0$, with boundary conditions $\mu(\cdot,\cdot,0)=\mu_0$ and $\mu(\cdot,\cdot,1)=\mu_1$. 
Here, $a:\X\times\V\times [0,1]\mapsto\mathbb{R}^d$ is the acceleration field, and the Vlasov equation expresses conservation of mass in the product space.
\end{definition}

Similar to the first-order optimal transport problem, OAT admits a static coupling problem on the product space in the Kantorovich format. 
\footnote{To our knowledge, the Kantorovich formulation of OAT is first proposed and discussed by~\cite[Eq.10]{Chen2018MeasureValued}, and see also~\cite[Eq.4.14]{benamou2019second} and~\cite[Eq.8]{Brigati2025KineticOT}.} 
\begin{definition}[\bf Kantorovich formulation of OAT~\citep{Chen2018MeasureValued,benamou2019second,Brigati2025KineticOT}]\label{def:oat2}
Given $z_0=(x_0,v_0)\sim\mu_0$ and $z_1=(x_1,v_1)\sim\mu_1$, the OAT problem is equivalent to solving an optimal coupling w.r.t. squared acceleration cost, i.e., 
\begin{eqnarray}\label{eq:k_oat}
\begin{aligned}
    \mathcal{A}_2^2(\mu_0,\,\mu_1) &= \sideset{}{_{\pi \in \Pi(\mu_0, \mu_1)}}\min \mathbb{E}_{(z_0, z_1) \sim \pi}[c_{\text{A}}^2(z_0,z_1)]\\
    &=\sideset{}{_{\pi \in \Pi(\mu_0, \mu_1)}}\min \mathbb{E}_{(z_0, z_1) \sim \pi}\Big[12 \underbrace{\Big\|\frac{x_1-x_0}{T}-\frac{v_1+v_0}{2} \Big\|^2}_{\text{velocity alignment}} + \underbrace{\| v_1-v_0 \|^2}_{\text{acceleration penalty}}\Big],
\end{aligned}
\end{eqnarray}
where $T>0$ denotes the time horizon defined between $\mu_0$ and $\mu_1$, which is $1$ in our case.
\end{definition}
The OAT problem in~\eqref{eq:bb_oat} admits a minimizer $(\mu,\,a)$, while the OAT problem in~\eqref{eq:k_oat} leads to an optimal coupling $\pi^\ast\in\Pi(\mu_0,\mu_1)$.
Let $\Phi:(\X\times\V)^2\times [0,1]\mapsto \X\times\V$ be an evaluation map associated with $a$.
For $z_0,z_1\sim\pi^{\ast}$, we have $\Phi_t(z_0,z_1):=(x_t,v_t)$. 
Then, for every time $t\in [0,1]$, the distribution at time $t$ can be determined by the push-forward of $\pi^{\ast}$ through $\Phi_t$, i.e., $\mu_t=\Phi_{t}\#\pi^\ast$.

In the OAT problem, the optimal coupling is chosen by matching samples and velocities jointly in the product space, so sample alignment and velocity alignment are treated on the same footing. 
This means that points with similar velocity are more likely to be paired, and such alignments reduce the need for transverse corrections. 
Once velocities are aligned, minimizing the integrated acceleration emerges as the natural straightness objective. 
The following theorem indicates that OAT provides a second-order tool for straightening flow.
\begin{theorem}[\bf Straightening Flow via OAT]\label{cor:dist-oat}
Given two boundary distributions $\mu_0,\mu_1\in \mathbb{P}(\X\times\V)$, OAT admits an optimal coupling $\pi^\ast\in\Pi(\mu_0,\mu_1)$ for the static problem in~\eqref{eq:k_oat}. 
For every $(x_0,v_0),\,(x_1,v_1)\sim\pi^*$, the corresponding trajectory is straight iff $v_0$ and $v_1$ are collinear with $x_1-x_0$. Otherwise, it bends exactly to match the endpoints’ orthogonal components. 
\end{theorem}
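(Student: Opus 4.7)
The plan is to reduce the measure-valued OAT problem to a family of pointwise boundary-value problems indexed by $\pi^{\ast}$-coupled endpoints, solve the pointwise problem in closed form as a cubic spline, and then read off the straightness characterisation by orthogonal projection onto $\Delta := x_1-x_0$. Existence of $\pi^{\ast}$ itself follows from standard Kantorovich theory, since the cost $c_A^2$ in~\eqref{eq:k_oat} is continuous and bounded below on the weakly compact set $\Pi(\mu_0,\mu_1)$. The reduction is essentially supplied by the excerpt: the dynamic optimum satisfies $\mu_t=\Phi_t\#\pi^{\ast}$, where $\Phi_t(z_0,z_1)=(x(t),v(t))$ is the evaluation map of the minimum-acceleration trajectory between $z_0=(x_0,v_0)$ and $z_1=(x_1,v_1)$, so it suffices to analyse a single such trajectory.

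For the pointwise problem I minimise $\int_0^1 \|\ddot x(t)\|^2\,\mathrm{d}t$ subject to $x(0)=x_0$, $x(1)=x_1$, $\dot x(0)=v_0$, $\dot x(1)=v_1$. The Euler--Lagrange equation $x^{(4)}(t)=0$ forces $x$ to be a cubic, and Hermite interpolation of the four boundary data gives the unique minimiser
\[
x(t)=x_0 + v_0\,t + (3\Delta-2v_0-v_1)\,t^2 + (v_0+v_1-2\Delta)\,t^3.
\]
Integrating $\|\ddot x\|^2$ over $[0,1]$ recovers $12\|\Delta-(v_0+v_1)/2\|^2+\|v_1-v_0\|^2$, matching the Kantorovich cost in Definition~\ref{def:oat2}; this is both a consistency check that the cubic spline realises the OAT infimum and an explicit identification of $\Phi_t$.

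To read off straightness, decompose $v_j=v_j^{\parallel}+v_j^{\perp}$ with respect to $\Delta$ for $j\in\{0,1\}$. Since $\Delta^{\perp}=0$, the component of the cubic orthogonal to $\Delta$ simplifies to
\[
(x(t)-x_0)^{\perp}=v_0^{\perp}\,t-(2v_0^{\perp}+v_1^{\perp})\,t^2+(v_0^{\perp}+v_1^{\perp})\,t^3,
\]
which depends \emph{only} on $v_0^{\perp}$ and $v_1^{\perp}$. Since $\{t,t^2,t^3\}$ is linearly independent on $[0,1]$, this expression vanishes identically iff $v_0^{\perp}=v_1^{\perp}=0$, i.e., iff both $v_0$ and $v_1$ are collinear with $\Delta$, giving the iff. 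When collinearity fails, the displayed cubic \emph{is} the bending, and the formula shows it is determined entirely by the endpoints' orthogonal velocity components, proving the remaining claim.

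The main technical hurdle I foresee is justifying the reduction step: one has to verify that the push-forward $\Phi_t\#\pi^{\ast}$ along the cubic-spline evaluation map is admissible for the Vlasov continuity equation and attains the infimum in~\eqref{eq:bb_oat}, as opposed to merely furnishing a candidate. This is the standard superposition/disintegration argument of the kinetic-OT literature already cited (notably \citep{Brigati2025KineticOT,Chen2018MeasureValued}); once it is in place, every remaining step reduces to elementary calculus on cubic polynomials.
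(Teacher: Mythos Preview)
Your proposal is correct and follows essentially the same approach as the paper: reduce to the single-path problem via the static--dynamic equivalence, solve it as the cubic Hermite spline from the Euler--Lagrange equation $x^{(4)}=0$, and decide straightness by orthogonal decomposition with respect to $\Delta=x_1-x_0$. The only cosmetic difference is that the paper argues abstractly that the perpendicular component solves a homogeneous strictly convex problem (hence vanishes) when $v_0,v_1$ are collinear with $\Delta$, whereas you write the perpendicular cubic explicitly and invoke linear independence of $\{t,t^2,t^3\}$; both routes are equivalent and the paper likewise defers the $\Phi_t\#\pi^\ast$ admissibility step to the cited kinetic-OT references.
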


By adopting the OAT formulation, we shift from enforcing constant velocity to enforcing velocity alignment and acceleration minimization, which leads to the proposed OAT-FM method.
\textbf{Essentially, OAT-FM is motivated by a desideratum: 
For a pre-trained flow model, whose velocity information is relatively reliable, refining it by solving an OAT problem can benefit its performance.}
Suppose that we have derive a flow trajectory in $[0, 1]$ based on a model $v_\theta$, whose endpoints are $z_0=(x_0,v_0)$ and $z_1=(x_1, v_1)$, respectively.
Given the model state at time $t$, denoted as $z_t(\theta)=(x_t,v_\theta(x_t,t))$, where $x_t=(1-t)\cdot x_0+t\cdot x_1$ and $t\in [0,1]$, we define a cost function as follows
\begin{eqnarray}\label{eq:loss_oat}
\begin{aligned}
    \ell_{\mathcal{A}}(z_0,z_1,t;\,\theta) = & \, \alpha\Big\|\frac{x_t-x_0}{t}-\frac{v_0+v_{\theta}(x_t,t)}{2}\Big\|_2^2+(1-\alpha)\|v_{\theta}(x_t,t)-v_0\|_2^2\\
    + &\, \alpha\Big\|\frac{x_1-x_t}{1-t}-\frac{v_{\theta}(x_t,t)+v_1}{2}\Big\|_2^2 +(1-\alpha)\|v_1-v_{\theta}(x_t,t)\|_2^2\\
    =& \, \frac{1}{13}\Big(c_{\mathcal{A}}^2(z_0,z_t(\theta))+c_{\mathcal{A}}^2(z_t(\theta),z_1)\Big)\qquad\text{when}~\alpha=\frac{12}{13}. \black
\end{aligned}  
\end{eqnarray}
Obviously, this cost is based on the squared acceleration cost in~\eqref{eq:k_oat}, and we introduce a hyperparameter $\alpha\in [0,1]$ to balance the term of velocity alignment and that of acceleration penalty.
In practice, we can implement $\frac{x_t-x_0}{t}$ and $\frac{x_1-x_t}{1-t}$ equivalently by $x_1-x_0$. 

Given noise distribution $\mu_0$ and data distribution $\mu_1$, we can fine-tune the flow model by minimizing the expectation of $\ell_{\mathcal{A}}(z_0,z_1,t;\,\theta)$ over all $t\in [0, 1]$, $z_0\sim\mu_0$, and $z_1\sim\mu_1$, which corresponds to the following \textbf{OAT-FM problem}:
\begin{eqnarray}\label{eq:oat-flow}
\min_\theta\overbrace{\mathbb{E}_{(z_0,z_1)\sim\pi^\ast,~t\sim\text{Unif}[0,1]}[\ell_{\mathcal{A}}(z_0,z_1,t;\,\theta)]}^{\text{Upper-level:}~\mathcal{L}_{\text{OAT}}(\mu_0,\mu_1;\,\alpha)},\quad
\text{s.t.}\;\; \pi^{\ast}=\overbrace{\mathop{\arg\min}_{\pi\in\Pi(\mu_0,\mu_1)}\mathbb{E}_{(z_0,z_1)\sim\pi}[c_{\mathcal{A}}^2(z_0,\,z_1)]}^{\text{Lower-level:}~\mathcal{A}_2^2(\mu_0,\mu_1)}.
\end{eqnarray}
The learning problem in~\eqref{eq:oat-flow} considers an OAT-based flow-matching objective. 
In particular, the parameter $\alpha$ balances directional alignment with a proxy for small total acceleration, and the expectation over $t$ averages these effects along the path. 

\textbf{Remark.} The connection between the OAT problem in~\eqref{eq:k_oat} and our OAT-FM method is analogous to that between the OT problem and OT-CFM. 
In particular, it has been well known that OT-CFM learns a flow to achieve an optimal transport in the sample space, straightening flow trajectories by pursuing constant velocity~\citep{tong2024improving}. 
Our OAT-FM learns a flow to achieve optimal acceleration transport in the product space of sample and velocity, straightening flow trajectories by minimizing acceleration (i.e., a smooth velocity field).

The objective of OAT-FM provides a tight bound on the true OAT second-order discrepancy, ensuring effective minimization of the acceleration.
\begin{theorem}[\bf OAT Bound of OAT-FM]\label{thm:flow_bound}
The OAT-FM objective $\mathcal{L}_{\text{OAT}}(\mu_0,\,\mu_1;\,\alpha)$ is lower-bounded by a scaled version of the true OAT second-order discrepancy, i.e.,
\begin{equation}
    \mathcal{L}_{\text{OAT}}(\mu_0,\,\mu_1;\,\alpha) \ge \frac{2}{27} \mathcal{A}_{2}^2(\mu_0,\,\mu_1),
\end{equation}
with $\alpha = 2/3$, and the equality held if and only if $v_1=v_0$ for $\pi^\ast$-almost every pair.
\end{theorem}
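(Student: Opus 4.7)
\textbf{Proof plan for Theorem \ref{thm:flow_bound}.}

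My plan is to establish the bound pointwise in $(z_0,z_1)$ and then integrate against the optimal coupling $\pi^{\ast}$. The first move is to exploit the linear interpolation $x_t=(1-t)x_0+tx_1$, which gives $(x_t-x_0)/t=(x_1-x_t)/(1-t)=x_1-x_0=:d$. Writing $w:=v_{\theta}(x_t,t)$ for the free value of the velocity field at the interpolant, I substitute $\alpha=2/3$ into $\ell_{\mathcal{A}}$ and observe that the expression is now independent of $t$, so the expectation over $t\sim\text{Unif}[0,1]$ is trivial and the whole question reduces to a scalar quadratic inequality in $w$.

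Next, I introduce the convenient variables $u:=(v_0+v_1)/2$, $\delta:=(v_1-v_0)/2$, and $f:=d-u$ (the velocity-alignment defect appearing in $c_{\mathcal{A}}^2$). Applying the parallelogram identity to the two $\alpha$-weighted terms and to the two $(1-\alpha)$-weighted terms separately, I expect to collapse $\ell_{\mathcal{A}}$ to the canonical form
\begin{equation*}
\ell_{\mathcal{A}}(z_0,z_1,t;\theta)=\tfrac{4}{3}\bigl\|f-\tfrac{w-u}{2}\bigr\|^2+\tfrac{2}{3}\|w-u\|^2+\|\delta\|^2,
\end{equation*}
which is strictly convex in $w$. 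Because $v_{\theta}$ can in principle realize any measurable function on $\X\times[0,1]$, a valid lower bound on $\inf_{\theta}\mathcal{L}_{\text{OAT}}$ is obtained by minimizing the integrand pointwise over $w$. The first-order condition yields $w^{\ast}-u=\tfrac{2}{3}f$, and plugging this back in produces
\begin{equation*}
\ell_{\mathcal{A}}^{\ast}(z_0,z_1)=\tfrac{8}{9}\|f\|^2+\tfrac{1}{4}\|v_1-v_0\|^2.
\end{equation*}

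Finally, since $c_{\mathcal{A}}^2(z_0,z_1)=12\|f\|^2+\|v_1-v_0\|^2$, the target bound is $\tfrac{2}{27}c_{\mathcal{A}}^2=\tfrac{8}{9}\|f\|^2+\tfrac{2}{27}\|v_1-v_0\|^2$, so the pointwise gap is
\begin{equation*}
\ell_{\mathcal{A}}^{\ast}(z_0,z_1)-\tfrac{2}{27}c_{\mathcal{A}}^2(z_0,z_1)=\bigl(\tfrac{1}{4}-\tfrac{2}{27}\bigr)\|v_1-v_0\|^2=\tfrac{19}{108}\|v_1-v_0\|^2\ge 0.
\end{equation*}
Integrating this inequality against $\pi^{\ast}$ and using the Kantorovich identity $\mathcal{A}_2^2(\mu_0,\mu_1)=\mathbb{E}_{\pi^{\ast}}[c_{\mathcal{A}}^2]$ from Definition \ref{def:oat2} delivers the claimed bound, and the equality case is immediate: the scalar gap vanishes iff $v_0=v_1$ for $\pi^{\ast}$-a.e.\ pair.

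The main obstacle I anticipate is not conceptual but bookkeeping: the parallelogram-identity expansion has to be done carefully so that cross terms cancel and the $\|\delta\|^2$ contributions from the two blocks combine correctly. A secondary point that should be addressed explicitly is the justification for replacing $\inf_{\theta}$ by pointwise minimization over $w$; this is standard in the flow-matching literature under the usual universal-approximation assumption on the parametric family $\{v_{\theta}\}$, and it ensures the lower bound is tight in the sense that the displayed minimizer $w^{\ast}$ is realizable (up to approximation error).
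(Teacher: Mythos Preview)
Your proposal is correct and follows essentially the same line as the paper's proof: both apply the parallelogram identity to collapse the four quadratic terms into a sum of two quadratics in the free variable $v_\theta(x_t,t)$ plus a residual in $v_1-v_0$, then minimize pointwise and compare the resulting coefficients against $c_{\mathcal{A}}^2=12\|f\|^2+\|v_1-v_0\|^2$. The only noteworthy difference is that the paper carries out the computation for general $\alpha\in[0,1]$, obtaining $\min_{v_t}\ell_{\mathcal{A}}=\frac{8\alpha(1-\alpha)}{4-3\alpha}\|f\|^2+\bigl(\tfrac12-\tfrac{3}{8}\alpha\bigr)\|v_1-v_0\|^2$, and then argues that $c(\alpha)=\min\{\tfrac{1}{12}\cdot\tfrac{8\alpha(1-\alpha)}{4-3\alpha},\,\tfrac12-\tfrac{3}{8}\alpha\}$ is maximized at $\alpha=2/3$; you instead substitute $\alpha=2/3$ at the outset, which is shorter but forfeits the optimality statement for the choice of $\alpha$.
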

The proofs of all theorems can be found in Appendix~\ref{app:proofs}.

\subsection{Efficient Implementation}
Similar to OT-CFM, the OAT-FM problem in~\eqref{eq:oat-flow} is a bi-level optimization problem as well. 
The lower-level problem determines a coupling $\pi^\ast$. 
The upper-level problem updates $\theta$ given $\pi^\ast$. 
In practice, we solve them via alternating optimization.
Following the existing methods in~\citep{pooladian2023multisample,tong2024improving}, we solve the lower-level problem via min-batch approximation. 

It should be noted that, although the coupling of OAT problem has a four-dimensional coupling, i.e., $\pi(z_0,z_1)=\pi(x_0,x_1,v_0,v_1)$, it has a decomposable structure in the context of FM. 
In particular, given an arbitrary sample $x$ and a time stamp $t$, we can determine its velocity as $v_{\theta}(x,t)$, which is conditionally independent of other samples or velocities. 
Therefore, we have $\pi(x_0,x_1,v_0,v_1)=\pi_x(x_0,x_1)\pi(v_0,v_1| x_0,x_1)=\pi_x(x_0,x_1)\pi_v(v_0|x_0)\pi_v(v_1|x_1)$, where $\pi_x$ is the marginal coupling associated with sample pairs and $\pi_v(\cdot|x_t)=\delta_{v_{\theta}(x,t)}$.
As a result, we simplify the lower-level OAT problem in~\eqref{eq:oat-flow} as
\begin{eqnarray}\label{eq:ot_oat}    \mathop{\arg\min}_{\pi\in\Pi(\mu_0,\mu_1)}\!\!\mathbb{E}_{(z_0, z_1) \sim \pi}\Big[c_{\mathcal{A}}^2(z_0,z_1)\Big]
    \!\!\Rightarrow\!\!\mathop{\arg\min}_{\pi_x\in\Pi(\rho_{0},\rho_{1})}\!\!\mathbb{E}_{(x_0, x_1) \sim \pi_x}\Big[12\|x_1\!-\!x_0-\bar{v}_{x_0,x_1}\|^2\!+\!\|\tilde{v}_{x_0,x_1}\|_2^2\Big],\!\!
\end{eqnarray}
where $\rho_0,\rho_1\in\mathbb{P}(\mathcal{X})$ denote the noise and data distributions, $\bar{v}_{x_0,x_1}=\frac{1}{2}\big(v_{\theta}(x_0,0)+v_{\theta}(x_1,1)\big)$, and $\tilde{v}_{x_0,x_1}=v_{\theta}(x_1,1)-v_{\theta}(x_0,0)$.
The reformulated problem in~\eqref{eq:ot_oat} becomes a classic OT problem~\citep{peyre2019computational}. 
As a result, the complexity of OAT-FM is the same as OT-CFM~\citep{tong2024improving}.
The detailed derivation of~\eqref{eq:ot_oat} and the scheme of our learning algorithm are provided in Appendix~\ref{app:alg}.

\section{Comparisons with Related Work}

Different from existing OT-based flow matching methods~\citep{tong2024improving, pooladian2023multisample,kornilov2024optimal}, OAT-FM considers the coupling in the ``sample-velocity'' product space that minimizes the expected acceleration of flow, which fully leverages the endpoint velocity of flow to compute the OT plan and sampling noise-data pairs. 
Alongside OAT-FM, several works have extended diffusion/flow models based on second-order dynamics. 
Let $\rho_t\in\mathbb{P}(\mathcal{X})$. 
For $x_0\sim\rho_0$ and $x_1\sim\rho_1$, the rectified interpolation~\citep[Definition 3.12]{gong2025theoretical} between them, including the trajectory, the velocity, and the acceleration, can be written as $x_t=\alpha_t x_0+\beta_t x_1$,  $v_t= \dot \alpha_t x_0+ \dot\beta_t x_1$, and
$a_t= \ddot \alpha_t x_0+ \ddot \beta_t x_1$, where $\{\alpha_t,\beta_t,\dot\alpha_t,\dot\beta_t,\ddot\alpha_t,\ddot\beta_t\}$ are predefined time–dependent coefficients.
When $\rho_t$ satisfies the continuity equation $\partial_t \rho_t(x)+\nabla_x \cdot \big(v_t(x),\rho_t(x)\big)=0$, differentiating it once more in time yields a second–order continuity law~\citep[Lemma 5.1]{gong2025theoretical}:
\begin{equation}\label{eq:sec-order-countuity}
\partial^2_{t}\rho_t+\nabla_x\cdot(v_t\partial_t\rho_t+a_t\rho_t)=0,
\end{equation}
which couples density acceleration to both the velocity and acceleration fields.

Building on this perspective, recent second–order flow matching methods incorporate acceleration into their training losses. 
NRFlow~\citep{chen2025nrflow} augments the standard OT-CFM objectives by regressing a neural velocity predictor $v_{\theta_1}(x,t)$ to the rectified velocity $v_t$, and simultaneously regressing an acceleration predictor $a_{\theta_2}(v,x,t)$ to the rectified acceleration $a_t$. 
The resulting loss is
\begin{align}\label{eq:2nd-cost}
\mathcal{L}_{\text{NRFlow}} = \mathbb{E}_{(x_0,x_1)\sim\pi,~t\sim\text{Unif}[0,1]}[\|v_t-v_{\theta_1}(x_t,t)\|_2^2 + \|a_t-a_{\theta_2}(v_{\theta_1}(x_t,t),x_t,t)\|_2^2].
\end{align}
HOMO~\citep{chen2025high} extends NRFlow by adding a self-consistency penalty, ensuring that the instantaneous velocity matches the average of the velocity and its one-step update $x_{t+d}$, i.e., 
\begin{align}\label{eq:homo-cost}
\mathcal{L}_{\text{HOMO}}= \mathcal{L}_{\text{NRFlow}} + \mathbb{E}_{(x_0,x_1)\sim\pi,~t\sim\text{Unif}[0,1]}[\|v_{\theta_1}(x_t,t)-\bar{v}_t\|_2^2],
\end{align}
with the target velocity defined as $\bar{v}_t=\tfrac{1}{2}(v_{\theta_1}(x_t,t)+v_{\theta_1}(x_{t+d},t+d))$.
Inspired by MeanFlow~\citep{geng2025mean}, SOM~\citep{cao2025towards} replaces pointwise supervision with interval averages and aligns the model to averaged quantities over $[r,t]$, whose training loss is
\begin{align}\label{eq:som-cost}
\mathcal{L}_{\text{SOM}}
=\mathbb{E}_{(x_0,x_1)\sim\pi,~r\sim\text{Unif}[0,1],~t\sim\text{Unif}[r,1]}[\|v_{\theta_1}(x_t,t)-\bar v_r(x_t)\|_2^2 + \|a_{\theta_2}(x_t,t)-\bar a_r(x_t)\|_2^2].
\end{align}
where $\bar v(z_t,r,t)=\frac{1}{t-r}\int_r^t v(z_\tau,\tau)\,\mathrm{d}\tau$ and $\bar a(z_t,r,t)=\frac{1}{t-r}\int_r^t a(z_\tau,\tau)\,\mathrm{d}\tau$.
Unfortunately, till now, the performance of these second-order FM methods in high-dimensional data generation tasks (e.g., high-resolution image generation) has not been verified.

\begin{table}[t]
\centering
\caption{A comparison of various FM methods based on first- and second-order dynamics}
\label{tab:cmp}
\tabcolsep=3pt
\small{
\begin{tabular}{lcclc}
\toprule
Method & Dynamics & Parameterization & Training Loss & Space \\
\midrule
OT-CFM
& $\partial_t \rho + \nabla_x(v\rho)=0$
& $v_\theta(x,t)$
& $\mathcal{L}_{\text{CFM}}$ in~\eqref{eq:otcfm}
& $\mathcal{X}$ \\
NRFlow / HOMO / SOM
& $\partial_{t}^2\rho + \nabla_x\!\cdot\!\big(v\,\partial_t\rho + a\rho\big)=0$
& $v_{\theta_1}(x,t),\, a_{\theta_2}(x,t)$
& \eqref{eq:2nd-cost} / \eqref{eq:homo-cost} / \eqref{eq:som-cost}
& $\mathcal{X}$ \\
\textbf{OAT-FM}
& $\partial_t \mu + \nabla_x(v\mu) + \nabla_v(a\mu)=0$
& $v_\theta(x,t)$
& $\mathcal{L}_{\text{OAT}}$ in~\eqref{eq:oat-flow}
& $\mathcal{X}\times\mathcal{V}$ \\
\bottomrule
\end{tabular} 
}
\end{table}

Different from the above methods, OAT-FM introduces a physically grounded objective tied directly to acceleration control and second-order transport. 
As shown in Table~\ref{tab:cmp}, the key distinction lies in the underlying dynamics. 
OAT-FM evolves distributions in the joint space of sample and velocity according to Vlasov equation $\partial_t \mu(x,v)+\nabla_x\cdot(v\mu)+\nabla_v\cdot(a\mu)=0$. 
This law enforces at the population level the intimate coupling between the density of states and the acceleration field that drives them. 
In contrast, NRFlow, HOMO, and SOM abide the classical continuity law and its time derivative~\eqref{eq:sec-order-countuity}, and train by regressing point-wise velocity and acceleration signals along a prescribed rectified path. 
Such formulations provide local supervision but do not impose a closed second-order transport constraint on the distribution as a whole. 
By casting training in the Vlasov flow rationale, OAT-FM thus $i)$ encodes second-order conservation directly during training, $ii)$ regularizes acceleration rather than learning additional acceleration models, and $iii)$ avoids dependence on a specific time parametrization or rectification rule for supervision.

\section{Experiments}

We evaluate the efficacy of OAT-FM in various tasks, from low-dimensional optimal transport to high-dimensional image generation. 
This section shows representative experimental results.
More experimental results and implementation details are included in Appendix~\ref{appendix:exp_settings}.

\subsection{Testing on the Low-dimensional OT Benchmark}

We first validate OAT-FM on the low-dimensional OT benchmark~\citep{tong2024improving}, which includes five 2D point cloud transport tasks. 
After applying FM~\citep{lipman2023flow}, I/SB/OT-CFM~\citep{tong2024improving}, and VP-CFM~\citep{albergo2023stochastic}, respectively, to train an MLP-based generator by 20,000 batches, we leverage OAT-FM to continually refine the model with the same number of batches.
For fairness, we compare the models achieved by the two-phase FM paradigm with those trained by the baseline methods (i.e., FM, I/SB/OT-CFM, and VP-CFM) with 40,000 batches on two metrics: $i)$ the 2-Wasserstein distance $\mathcal{W}_2^2(\hat{\rho}_1,\rho_1)$ between the distribution of generated samples $\hat{\rho}_1$ and that of target data $\rho_1$, and $ii)$ the \emph{Normalized Path Energy (NPE)} defined in terms of the 2-Wasserstein distance as
\begin{equation}
\mathrm{NPE}(v_{\theta}) = \frac{|\mathrm{PE}(v_{\theta}) - \mathcal{W}_2^2(\rho_0, \rho_1)|}{\mathcal{W}_2^2(\rho_0, \rho_1)},~\text{where}~\mathrm{PE}(v_{\theta}) = \mathbb{E}_{x_0\sim \rho_0} \int_0^1 \|v_{\theta}(x_t, t\|^2 \, \mathrm{d}t.
\label{eq:evaluate}
\end{equation}
Here, we generate all samples using the RK45 ODE solver~\citep{dormand1980family} with 101 integration steps from $t=0$ to $t=1$ and compute $\mathrm{PE}$ accordingly.
This metric evaluates the transport cost of the learned flow relative to the dynamic optimal transport. 

\begin{figure}[t]
    \centering  
    \includegraphics[width=16.5cm]{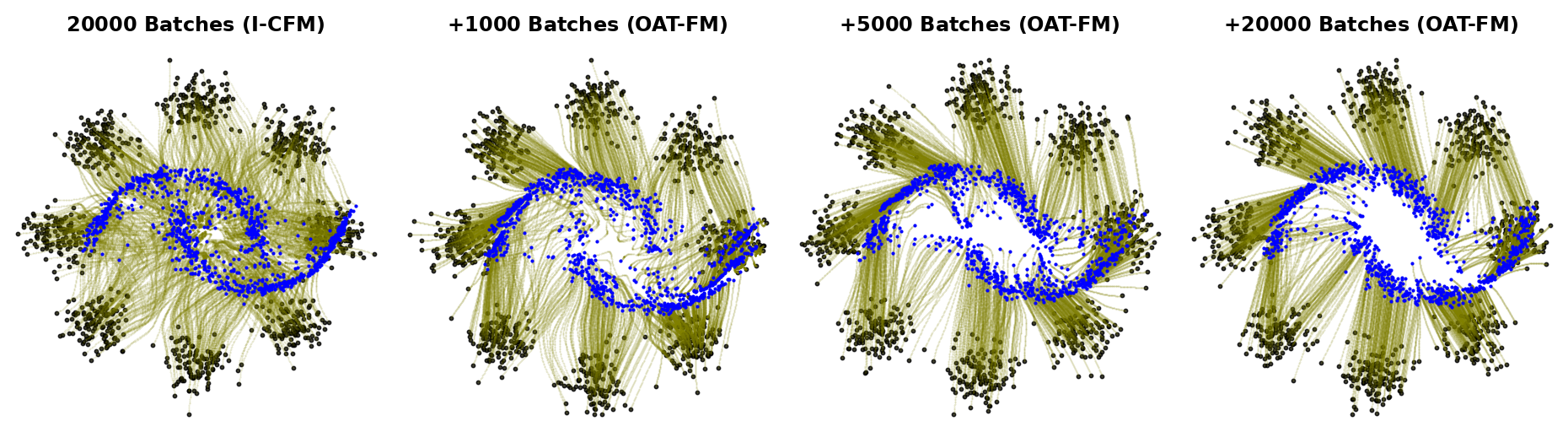}
    \caption{An illustration of refining the flow of I-CFM via OAT-FM on the eight Gaussians to the Moons dataset. We conduct this experiment based on the code base provided in~\citep{tong2024improving}.}  
    \label{fig:illus_ot_benchmark}
\end{figure}

\begin{table}[t]
\centering
\caption{A comparison of various methods in terms of data fitting (2-Wasserstein) and optimal transport approximation (normalized path energy). We run each task in five trials and record the average performance and standard deviation.}
\label{tab:ot_benchmark}
\tabcolsep=3pt
\small{
\begin{tabular}{l cc cc cc cc cc}
\toprule
Task & \multicolumn{2}{c}{$\mathcal{N}\to$8gs} & \multicolumn{2}{c}{8gs$\to$moons} & \multicolumn{2}{c}{$\mathcal{N}\to$moons} & \multicolumn{2}{c}{$\mathcal{N}\to$scurve} & \multicolumn{2}{c}{moons$\to$8gs} \\
\cmidrule(lr){2-3} \cmidrule(lr){4-5} \cmidrule(lr){6-7} \cmidrule(lr){8-9} \cmidrule(lr){10-11}
Method & $\mathcal{W}_2^2\downarrow$ & NPE$\downarrow$ & $\mathcal{W}_2^2\downarrow$ & NPE$\downarrow$ & $\mathcal{W}_2^2\downarrow$ & NPE$\downarrow$ & $\mathcal{W}_2^2\downarrow$ & NPE$\downarrow$ & $\mathcal{W}_2^2\downarrow$ & NPE$\downarrow$\\
\midrule
FM & $\text{0.58}_{\pm\text{0.16}}$ & $\text{0.24}_{\pm\text{0.01}}$ & $\text{5.80}_{\pm\text{0.06}}$  & $\text{0.05}_{\pm\text{0.02}}$ & $\text{0.15}_{\pm\text{0.07}}$ & $\text{0.27}_{\pm\text{0.05}}$ & $\textbf{0.81}_{\pm\text{0.39}}$ & $\text{0.08}_{\pm\text{0.04}}$ & $\text{7.39}_{\pm\text{0.45}}$ & $\text{0.96}_{\pm\text{0.05}}$ \\
\rowcolor{lightpink} 
+{\textcolor{purple}{OAT-FM}} 
& $\textbf{0.31}_{\pm\text{0.09}}$ & $\textbf{0.02}_{\pm\text{0.01}}$ & $\textbf{0.08}_{\pm\text{0.03}}$ & $\textbf{0.01}_{\pm\text{0.01}}$ & $\textbf{0.08}_{\pm\text{0.03}}$ & $\textbf{0.03}_{\pm\text{0.01}}$ & $\text{0.90}_{\pm\text{0.18}}$ & $\textbf{0.03}_{\pm\text{0.02}}$ & $\textbf{0.28}_{\pm\text{0.10}}$ & $\textbf{0.04}_{\pm\text{0.02}}$\\
\midrule
I-CFM & $\text{0.45}_{\pm\text{0.18}}$ & $\text{0.30}_{\pm\text{0.01}}$ & $\text{0.18}_{\pm\text{0.05}}$ & $\text{1.40}_{\pm\text{0.05}}$  & $\text{0.11}_{\pm\text{0.03}}$ & $\text{0.52}_{\pm\text{0.06}}$ & $\text{1.16}_{\pm\text{0.47}}$ & $\text{0.03}_{\pm\text{0.03}}$ &  $\text{0.74}_{\pm\text{0.12}}$ & $\text{1.19}_{\pm\text{0.06}}$\\
\rowcolor{lightpink} 
+\textcolor{purple}{OAT-FM} 
& $\textbf{0.32}_{\pm\text{0.10}}$ & $\textbf{0.04}_{\pm\text{0.01}}$ & $\textbf{0.15}_{\pm\text{0.03}}$ & $\textbf{0.13}_{\pm\text{0.01}}$ & $\textbf{0.07}_{\pm\text{0.02}}$ & $\textbf{0.04}_{\pm\text{0.04}}$ & $\textbf{1.12}_{\pm\text{0.45}}$ & $\textbf{0.03}_{\pm\text{0.02}}$ & $\textbf{0.50}_{\pm\text{0.11}}$ & $\textbf{0.44}_{\pm\text{0.03}}$\\
\midrule
VP-CFM & $\text{0.43}_{\pm\text{0.14}}$ & $\text{0.24}_{\pm\text{0.01}}$ & $\text{0.15}_{\pm\text{0.02}}$ & $\text{1.24}_{\pm\text{0.05}}$ & $\text{0.10}_{\pm\text{0.03}}$ & $\text{0.31}_{\pm\text{0.07}}$ & $\textbf{1.05}_{\pm\text{0.41}}$ & $\text{0.22}_{\pm\text{0.04}}$ &  $\text{1.39}_{\pm\text{0.35}}$ & $\text{1.22}_{\pm\text{0.05}}$\\
\rowcolor{lightpink} 
+{\textcolor{purple}{OAT-FM}} 
& $\textbf{0.31}_{\pm\text{0.12}}$ & $\textbf{0.03}_{\pm\text{0.01}}$ & $\textbf{0.09}_{\pm\text{0.01}}$ & $\textbf{0.02}_{\pm\text{0.01}}$& $\textbf{0.07}_{\pm\text{0.02}}$ & $\textbf{0.04}_{\pm\text{0.01}}$ & $\text{1.10}_{\pm\text{0.34}}$ & $\textbf{0.03}_{\pm\text{0.02}}$ & $\textbf{0.32}_{\pm\text{0.10}}$ & $\textbf{0.10}_{\pm\text{0.02}}$\\
\midrule
SB-CFM & $\text{0.51}_{\pm\text{0.10}}$ & $\textbf{0.01}_{\pm\text{0.01}}$ & $\text{0.13}_{\pm\text{0.04}}$ & $\text{0.03}_{\pm\text{0.01}}$ & $\textbf{0.08}_{\pm\text{0.03}}$ & $\textbf{0.04}_{\pm\text{0.03}}$ & $\textbf{0.79}_{\pm\text{0.29}}$ & $\text{0.04}_{\pm\text{0.02}}$ &  $\text{0.36}_{\pm\text{0.14}}$ & $\textbf{0.03}_{\pm\text{0.02}}$\\
\rowcolor{lightpink} 
+{\textcolor{purple}{OAT-FM}} 
& $\textbf{0.34}_{\pm\text{0.08}}$ & $\text{0.03}_{\pm\text{0.01}}$ & $\textbf{0.07}_{\pm\text{0.01}}$ & $\textbf{0.01}_{\pm\text{0.01}}$ & $\text{0.09}_{\pm\text{0.04}}$ & $\text{0.10}_{\pm\text{0.04}}$ & $\text{0.80}_{\pm\text{0.18}}$ & $\textbf{0.02}_{\pm\text{0.02}}$ & $\textbf{0.25}_{\pm\text{0.08}}$ & $\textbf{0.03}_{\pm\text{0.02}}$\\
\midrule
OT-CFM & $\text{0.35}_{\pm\text{0.09}}$ & $\textbf{0.01}_{\pm\text{0.01}}$ & $\textbf{0.07}_{\pm\text{0.02}}$ & $\textbf{0.01}_{\pm\text{0.01}}$  & $\text{0.07}_{\pm\text{0.02}}$ & $\textbf{0.04}_{\pm\text{0.02}}$ & $\text{0.87}_{\pm\text{0.33}}$ & $\textbf{0.03}_{\pm\text{0.03}}$ &  $\text{0.31}_{\pm\text{0.10}}$ & $\textbf{0.02}_{\pm\text{0.02}}$\\
\rowcolor{lightpink} 
+\textcolor{purple}{OAT-FM} 
& $\textbf{0.32}_{\pm\text{0.10}}$ & $\text{0.04}_{\pm\text{0.01}}$ & $\textbf{0.07}_{\pm\text{0.01}}$ & $\textbf{0.01}_{\pm\text{0.01}}$ & $\textbf{0.06}_{\pm\text{0.01}}$ & $\textbf{0.04}_{\pm\text{0.01}}$ & $\textbf{0.83}_{\pm\text{0.34}}$ & $\text{0.04}_{\pm\text{0.02}}$ & $\textbf{0.29}_{\pm\text{0.09}}$ & $\text{0.10}_{\pm\text{0.02}}$\\
\bottomrule
\end{tabular}
}
\end{table}

The quantitative results are presented in Table~\ref{tab:ot_benchmark}.
We can find that with the help of OAT-FM, our phase-2 FM paradigm leads to better results in most situations. 
In particular, for those non-OT methods, e.g., FM, I-CFM, and VP-CFM, applying OAT-FM to achieve a second-phase training makes their flows fit dynamic optimal transport better, reducing the Wasserstein distance and NPE of each transport task significantly. 
For SB-CFM and OT-CFM, whose flows have been learned to fit dynamic optimal transports, applying OAT-FM can still make their models fit data distributions with lower Wasserstein distances while maintaining comparable NPE in general.
Figure~\ref{fig:illus_ot_benchmark} visualizes the progressive straightening of I-CFM's transport trajectories on the eight Gaussian distributions (denoted as ``8gs'') to the Moons dataset during the OAT-FM refining process.

\subsection{Unconditional Image Generation}

Beyond the above low-dimensional OT benchmark, we further evaluate OAT-FM on generating CIFAR-10 images~\citep{krizhevsky2009learning}. 
We initialize our training from the pre-trained FM~\citep{lipman2023flow}, I-CFM~\citep{tong2024improving}, OT-CFM~\citep{tong2024improving} and EDM~\citep{karras2022elucidating} models, which serve as a starting point by providing good estimates for the boundary velocities. 
For FM, I-CFM, and OT-CFM, we follow the settings in~\citep{tong2024improving}. 
For EDM, same as~\citep{lee2024improving}, we adapt it into a flow matching model by adjusting its time and scaling factors, which allows a seamless transition to our phase-2 training (see Appendix~\ref{appendix:df2flow}).
All models are trained with a per-GPU batch size of 128 and an EMA decay rate of 0.9999. 
In the inference phase, we follow the standard setting, employing the Dopri5 solver for FM and I/OT-CFM and the Heun solver for EDM, respectively.
For each model, we evaluate image quality using the Fr{\'e}chet Inception Distance (FID)~\citep{heusel2017gans} and record its number of training batches and that of inference steps (denoted as NFE) as well.

As shown in Table~\ref{tab:cifar-10}, OAT-FM consistently enhances the generation quality across all pre-trained models.
Notably, for FM, I-CFM, and OT-CFM, our method achieves superior FID scores while requiring only 1K additional training batches, a substantial reduction from the 400K batches used by the original models. 
Furthermore, OAT-FM also improves upon the strong EDM baseline, lowering the FID from 1.96 to 1.93 with only 12K additional training batches. 
This result is also better than other competitive generative modeling methods, including the strong two-phase FM method 2-ReFlow++~\citep{lee2024improving}.
These results underscore that OAT-FM serves as an effective and computationally efficient plug-in module for refining existing unconditional generative models, thereby boosting their performance with minimal training overhead. 

\begin{table}[htbp]
    \centering
    \small 
    
    \begin{minipage}[t]{0.54\textwidth}
        \centering
        \vspace{0pt} 
        \captionof{table}{Comparisons of various methods in unconditional CIFAR-10 image generation. In the column ``\#Batch'', the number of training batches of each baseline method is in black, while that of OAT-FM is in \textcolor{purple}{purple}. The unit ``K'' means 1,000 batches. The results of the methods labeled by ``$\ast$'' are from~\cite{lee2024improving}.}
        \label{tab:cifar-10}
        \tabcolsep=3pt 
        \begin{tabular}{lrrc}
            \toprule
            Method & \#Batch & NFE$\downarrow$ & FID$\downarrow$ \\
            \midrule
            FM~\citep{lipman2023flow} & 400K & 147 & 3.71 \\
            \rowcolor{lightpink}
            FM + \textcolor{purple}{OAT-FM} & \textcolor{purple}{+1K} & 135 & \textbf{3.54} \\
            \midrule
            I-CFM~\citep{tong2024improving} & 400K & 149 & 3.67 \\
            \rowcolor{lightpink}
            I-CFM + \textcolor{purple}{OAT-FM} & \textcolor{purple}{+1K} & 138 & \textbf{3.48} \\
            \midrule
            OT-CFM~\citep{tong2024improving} & 400K & 132  & 3.64 \\
            \rowcolor{lightpink}
            OT-CFM + \textcolor{purple}{OAT-FM} & \textcolor{purple}{+1K} & 126 & \textbf{3.46} \\
            \midrule
            DDPM$^\ast$ &  & 1,000 & 3.17 \\
            Score SDE$^\ast$ &  & 2,000 & 2.38 \\
            LSGM$^\ast$ &  & 147 & 2.10 \\
            2-ReFlow++$^\ast$ & & 35 & 2.30 \\
            EDM &  & 35 & 1.96 \\
            \rowcolor{lightpink}
            EDM + \textcolor{purple}{OAT-FM} & \textcolor{purple}{+12K} & 35 & \textbf{1.93} \\ 
            \bottomrule
        \end{tabular}
    \end{minipage}
    \hfill 
    \begin{minipage}[t]{0.43\textwidth}
        \centering
        \vspace{0pt} 
        
        \captionof{table}{Ablation studies of OAT-FM on the CIFAR-10 dataset. The FID scores of the models trained under different settings are provided.}
        \label{tab:ablation}
        \tabcolsep=2.5pt
        \begin{tabular}{ll|cc}
        \toprule
         Lower   &Upper &\multicolumn{2}{c}{Phase-1}\\
         Prob.   &Prob. &FM &EDM\\
        \midrule
          \multicolumn{2}{c|}{W/O Phase-2} & 3.71 &1.96\\
          $\mathcal{W}_2^2$ & $\mathcal{L}_{\text{CFM}}$ & 3.75 & 8.77 \\
          $\mathcal{W}_2^2$ & $\mathcal{L}_{\text{OAT}}$  & 3.55 & 8.68\\
          $\mathcal{A}_2^2$ & $\mathcal{L}_{\text{CFM}}$ & 3.81 & 1.95 \\
          $\mathcal{A}_2^2$ & $\mathcal{L}_{\text{OAT}}$  &\textbf{3.54} &\textbf{1.93} \\
        \bottomrule
        \end{tabular}
        
        \vspace{1.2em} 
        
        \includegraphics[width=0.9\linewidth]{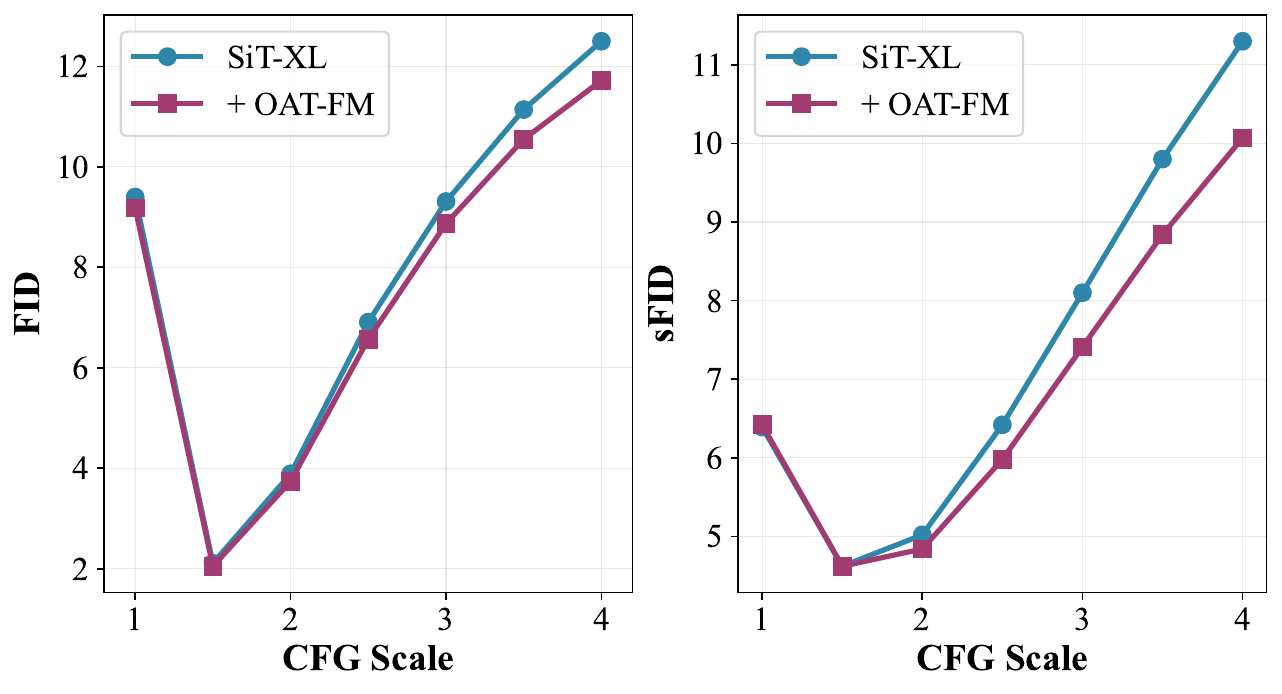}
        \captionof{figure}{The comparison of \textit{SiT-XL} and \textit{SiT-XL + OAT-FM} on FID and sFID.}
        \label{fig:cfg_fid}
    \end{minipage}
\end{table}

\textbf{Ablation Studies.}
As shown in Table~\ref{tab:ablation}, we conduct ablation studies on CIFAR-10 to validate the design of OAT-FM.
In particular, given the models trained by FM~\citep{lipman2023flow} and EDM~\citep{karras2022elucidating}, respectively, we continually train them by OAT-FM under different settings, dissecting the contributions of its key components, i.e., OAT-based coupling computation in its lower-level problem and the OAT-based objective used in its upper-level problem. 
To adapt EDM for flow matching, we reparameterize its denoising as a velocity field predictor. 
This allows us to initialize our model with a strong, pre-trained diffusion backbone, facilitating a seamless transition to the second-phase OAT-FM training paradigm.
The implementation details are in Appendix~\ref{appendix:df2flow}.

When deriving the coupling by computing the Wasserstein distance in~\eqref{eq:otcfm}, the FM-based model maintains its low FID score on CIFAR-10 while EDM suffers severe performance degradation no matter what upper-level objective is.
For the upper-level objective, we can find that $\mathcal{L}_{\text{OAT}}$ works better than $\mathcal{L}_{\text{CFM}}$ consistently for both FM- and EDM-based models.
These results demonstrate that each component of OAT-FM plays an indispensable role in effectively refining flows and enhancing model performance.

\textbf{Remark.} Our OAT-based coupling improves EDM performance, whereas the classic OT-based coupling results in catastrophic performance degradation.
This phenomenon reveals the essential difference between OT and OAT in the velocity smoothness.
In particular, EDM learns a score function under the assumption that the noise at time $t$ is independent and isotropic Gaussian noise.
When reformulating EDM as an FM model (See Appendix~\ref{appendix:df2flow}), this assumption means that the velocity field is smoothed~\citep{song2020score}.
The OT-based coupling, however, achieves static, non-Markovian transport between noise and data in the sample space, without any smoothness constraint on the velocity. 
In contrast, the OAT-based coupling matches the ``sample-velocity'' tuples at $t=0$ with those at $t=1$ in the product space $\mathcal{X}\times\mathcal{V}$, solving the acceleration minimization problem in~\eqref{eq:bb_oat} equivalently. 
The objective of~\eqref{eq:bb_oat} reveals the isotropic Gaussian prior of the acceleration, which leads to smoothed velocity~\citep{benamou2019second}. 
Consequently, our OAT-based coupling follows the smoothness assumption and is suitable for refining EDM.

\subsection{Large-scale Conditional Image Generation}


\begin{figure}[t]
    \centering
    \subfigure[SiT-XL (Left) v.s. SiT-XL + OAT-FM (Right)]{
    \includegraphics[width=0.23\linewidth]{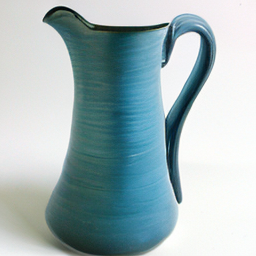}
    \includegraphics[width=0.23\linewidth]{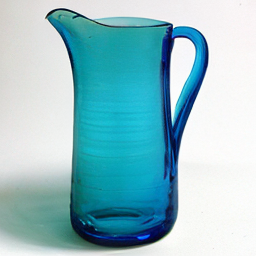}
    }
    \subfigure[SiT-XL (Left) v.s. SiT-XL + OAT-FM (Right)]{
    \includegraphics[width=0.23\linewidth]{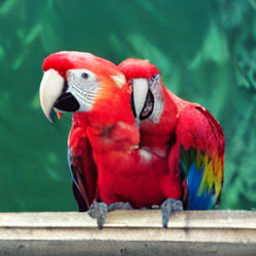}
    \includegraphics[width=0.23\linewidth]{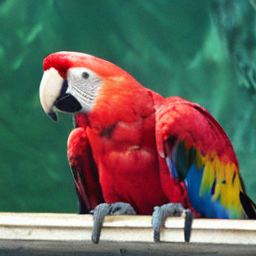}
    }
    \subfigure[SiT-XL (Left) v.s. SiT-XL + OAT-FM (Right)]{
    \includegraphics[width=0.23\linewidth]{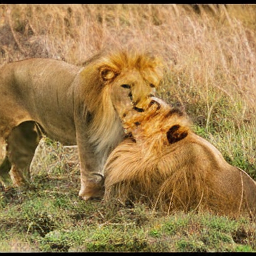}
    \includegraphics[width=0.23\linewidth]{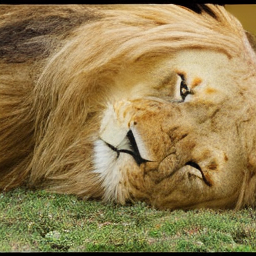}
    }
    \subfigure[SiT-XL (Left) v.s. SiT-XL + OAT-FM (Right)]{
    \includegraphics[width=0.23\linewidth]{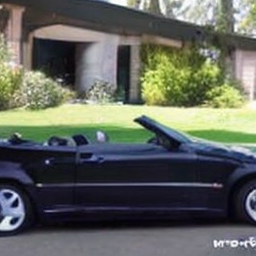}
    \includegraphics[width=0.23\linewidth]{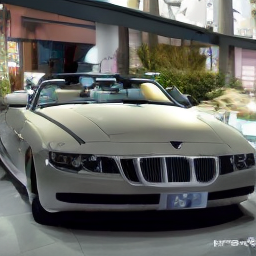}
    }
    \caption{The visual comparison for \textit{SiT-XL} and \textit{SiT-XL + OAT-FM} when CFG is 4.0.}
    \label{fig:cfg_fid2}
\end{figure}

To verify the feasibility of OAT-FM as a phase-2 training method in practice, given the state-of-the-art image generator SiT-XL~\citep{ma2024sit} trained on the ImageNet 256$\times$256 benchmark~\citep{deng2009imagenet}, we apply OAT-FM to continually refine the model on the dataset and test it in conditional image generation tasks.
We compare the model refined by OAT-FM with the original SiT-XL and other image generators on different metrics, including FID, spatial FID (sFID), Inception Score (IS)~\citep{salimans2016improved}, and the precision and recall measuring how well the real and generated data manifolds are overlapped with each other.
In the inference phase, we apply an ODE sampler and a SDE one, respectively, to generate images with different Classifier-Free Guidance (CFG) scales.

As detailed in Table~\ref{tab:imagenet}, for the original SiT-XL model derived by 1,400 training epochs, refining it by OAT-FM with merely five more training epochs (48K batches) leads to consistent improvements in FID, sFID, and IS. 
The precision and recall remain stable. 
In Figure~\ref{fig:cfg_fid}, we plot the performance of SiT-XL with and without OAT-FM while varying the CFG scale from 1.0 (no guidance) to 4.0. 
The results confirm that OAT-FM delivers consistent improvements across the entire range of scales. 
Note that, setting a large CFG scale helps generate high-quality images in general~\citep{ma2024sit,peebles2023scalable} (although leading to relatively high FID/sFID scores). 
Based on this fact, the superiority of OAT-FM is significant when CFG is large (e.g., CFG is 4.0), which further demonstrates the effectiveness of OAT-FM in practice.

Interestingly, when starting from the same noise, in some cases (e.g., the ``Bottle'' and ``Parrot'' shown in Figure~\ref{fig:cfg_fid2}), applying OAT-FM improves image details, maintains the main semantic and spatial content created by SiT-XL, and suppresses hallucination.
However, in the other cases (e.g., ``Lion'' and ``Car'' shown in Figure~\ref{fig:cfg_fid2}), applying OAT-FM results in the model producing images that are entirely different from those generated by SiT-XL.
This phenomenon indicates that the coupling associated with OAT-FM differs from that in the original model. 
For flow trajectories whose endpoints obey the data distribution well, OAT-FM straightens them with almost the same endpoints. 
For flow trajectories whose endpoints are undesired (e.g., in the ``Lion'' generated by SiT-XL), OAT-FM significantly changes their directions, leading to new endpoints that may better fit the data distribution. 
More visual results are in Appendix~\ref{app:visual}.

\section{Conclusion}
This work reconsiders flow matching through second-order transport, namely, optimal acceleration transport (OAT). 
OAT lifts the dynamics from continuity on the sample space to the Vlasov conservation law on the product space of position and velocity. 
The resulting product-space coupling aligns endpoints' directions and speeds, then suppresses total bending, which implies the necessary and sufficient condition for straightness. 
A new two-phase FM paradigm is developed, which first obtains reliable velocities using any standard flow matching/diffusion procedure and then fine-tune the model by OAT-FM.
Experiments demonstrate that OAT-FM helps improve various FM models consistently, which leads to promising image generation results.

\begin{table}[t]
\centering
\caption{A comparison on class-conditional image generation. 
In the column ``\#Epochs'', the number of training epochs of each baseline method is in black, while that of OAT-FM is in \textcolor{purple}{purple}.}
\label{tab:imagenet}
\tabcolsep=5pt
\small{
  \begin{tabular}{lrccccc} 
    \toprule
    Method  & \#Epochs & FID$\downarrow$ & sFID$\downarrow$ & IS$\uparrow$ & P$\uparrow$ & R$\uparrow$ \\ 
    \midrule
    BigGAN-deep~\citep{brock2019large} & & 6.95 & 7.36 & 171.4 & 0.87 & 0.28 \\ 
    StyleGAN-XL~\citep{sauer2022styleganxl} & & 2.30 & 4.02 & 265.1 & 0.78 & 0.53 \\ 
    Mask-GIT~\citep{chang2022maskgit} & & 6.18 & - & 182.1 & - & -\\ 
    ADM-G/U~\citep{dhariwal2021diffusion} & & 3.94 & 6.14 & 215.8 & 0.83 & 0.53 \\ 
    CDM~\citep{ho2021cascaded} & & 4.88 & - & 158.7 & - & - \\ 
    RIN~\citep{jabri2023scalable} & & 3.42 & - & 182.0 & - & - \\ 
    Simple Diffusion$_{\text{U-ViT, L}}$~\citep{hoogeboom2023simple} & & 2.77 & - & 211.8 & - & - \\ 
    VDM++~\citep{kingma2023understanding} & & 2.12 & - & 267.7 & - & -\\ 
    DiT-XL$_{\text{CFG=1.5}}$~\citep{peebles2023scalable} & & 2.27 & 4.60 & 278.2 & 0.83 & 0.57 \\ 
    \midrule
    SiT-XL$_{\text{CFG=1.5, Sampler=ODE}}$~\citep{ma2024sit}  & 1,400 & 2.11  & \textbf{4.62} & 256.0 & \textbf{0.81} & \textbf{0.61} \\ 
   \rowcolor{lightpink}
    SiT-XL$_{\text{CFG=1.5, Sampler=ODE}}$ + \textcolor{purple}{OAT-FM} &\textcolor{purple}{+5} & \textbf{2.05} & \textbf{4.62} &\textbf{259.4} &0.80 &\textbf{0.61} \\
    \midrule
    SiT-XL$_{\text{CFG=2.5, Sampler=ODE}}$  & 1,400 & 6.91  & 6.42 & 391.5 & \textbf{0.89} & 0.47 \\ 
    \rowcolor{lightpink}
    SiT-XL$_{\text{CFG=2.5, Sampler=ODE}}$ + \textcolor{purple}{OAT-FM} &\textcolor{purple}{+5} & \textbf{6.57} & \textbf{5.98} & \textbf{394.8} & \textbf{0.89} & \textbf{0.49} \\
    \midrule
    SiT-XL$_{\text{CFG=1.5, Sampler=SDE}}$ & 1,400 & 2.05 & 4.50 &  269.6 & \textbf{0.82} & \textbf{0.59}\\ %
    \rowcolor{lightpink}
    SiT-XL$_{\text{CFG=1.5, Sampler=SDE}}$ + \textcolor{purple}{OAT-FM} & \textcolor{purple}{+5} & \textbf{2.00}  & \textbf{4.43} &\textbf{275.1} &\textbf{0.82} &\textbf{0.59} \\
    \midrule
    SiT-XL$_{\text{CFG=2.5, Sampler=SDE}}$ & 1,400 & 7.75 & 6.64 & 405.0 & \textbf{0.90} & 0.45\\
    \rowcolor{lightpink}
    SiT-XL$_{\text{CFG=2.5, Sampler=SDE}}$ + \textcolor{purple}{OAT-FM} & \textcolor{purple}{+5} & \textbf{7.44} & \textbf{5.77} & \textbf{409.9} & \textbf{0.90} & \textbf{0.46} \\ 
    \bottomrule
  \end{tabular}
}
\end{table}

\textbf{Limitations and future work.} 
There are natural directions to refine the method. OAT-FM benefits from reasonably accurate endpoint velocities, so 
Currently, training from scratch via OAT-FM can be fragile due to its dependence on velocity information --- early velocity estimates are noisy and can misguide the product-space coupling. 
As a straightforward solution, warm starts using CFM or self-distillation in the spirit of Shortcut~\citep{frans2024one} and consistency models~\citep{kim2023consistency} may offer simple remedies before handing off to OAT-FM. 
However, whether the theory of OAT can be used to develop one-step FM method is still an open problem.
From a computational standpoint, mini-batch couplings scale quadratically in batch size. 
In the future, we plan to explore the dual form of OAT problem and develop a more efficient OAT solver to accelerate OAT-FM further.

\bibliography{refs}


\newpage
\appendix
\section*{Appendix}

\section{Detailed Proofs}\label{app:proofs}
Throughout this paper, we take absolutely continuous paths $x:[0,1]\to\mathbb{R}^d$ with $v=\dot x$ and $a=\dot v$. For $\|v(t)\|>0$, define the unit direction
\begin{equation}
\bs(t):=\frac{v(t)}{\|v(t)\|},
\end{equation}
and decompose the acceleration into tangential (speed) and normal (bending) parts
\begin{equation}
a_{\parallel}(t):= \left(\bs(t)\cdot a(t)\right) \bs(t), \qquad a_{\perp}(t):=a(t)-a_{\parallel}(t).
\end{equation}
These obey
\begin{equation}
\frac{d}{dt}\|v(t)\| = \bs(t)\cdot a(t), \qquad \dot \bs(t)=\frac{a_{\perp}(t)}{\|v(t)\|}.
\end{equation}
Thus $a_{\parallel}$ modulates speed, while $a_{\perp}$ is the source of bending. In particular, $a_{\perp}(t)\equiv 0$ iff $\bs(t)$ is constant, hence $x(t)$ is straight. 
When needed, the instantaneous curvature magnitude is
\begin{equation}
\|\dot \bs(t)\|=\frac{\|a_{\perp}(t)\|}{\|v(t)\|}.
\end{equation}

\subsection{Proof of Proposition \ref{prop:straightness}}\label{appedix:prop1}

Assume first that the trajectory is straight. Then there exist a unit vector $u$ and a scalar function $s(t)$ such that\footnote{Here $u$ denotes the fixed direction of the line, and $s(t)$ is the scalar coordinate along that line: writing $x(t)=x_0+s(t)u$ exactly encodes that $x(t)\in x_0+\mathbb{R}u$. This is different from $\bs(t)=v(t)/\|v(t)\|$, which is the instantaneous direction of motion. In the straight case one has $v(t)=\dot s(t)\,u$, hence $\bs(t)=v(t)/\|v(t)\|=\mathrm{sign}(\dot s(t))\,u$ wherever $\|v(t)\|>0$. Thus $\bs(t)=u$ when $\dot s(t)>0$ and $\bs(t)=-u$ when $\dot s(t)<0$; any sign change can occur only at instants where $v(t)=0$ (where $\bs$ is undefined), which is why direction constancy is stated on the set $\{t:\|v(t)\|>0\}$.} such that
\begin{equation}
x(t)=x_0+s(t)\,u,\qquad t\in[0,1].
\end{equation}
Differentiating gives
\begin{equation}
v(t)=\dot s(t)\,u,\qquad a(t)=\ddot s(t)\,u.
\end{equation}
Wherever $\|v(t)\|>0$, the unit direction of motion equals the fixed $u$, so $\bs(t)=u$ is constant in time, and $a(t)$ is a scalar multiple of $v(t)$. At instants where $v(t)=0$, the direction is immaterial and the collinearity $a(t)\parallel v(t)$ is trivially satisfied. In the decomposition above this means $a_{\perp}(t)=0$ whenever $\|v(t)\|>0$.

Conversely, suppose the velocity direction is time invariant and the acceleration is parallel to the velocity. Then there exist a unit vector $u$ and scalar functions $\alpha(t)$ and $\beta(t)$ such that
\begin{equation}
v(t)=\alpha(t)\,u,\qquad a(t)=\beta(t)\,u,\qquad t\in[0,1].
\end{equation}
From $\dot v=a$ it follows that $\dot\alpha(t)=\beta(t)$. Integrating $v=\dot x$ yields $x(t)=x_0+(\int_0^t \alpha(\tau)\mathrm{d}\tau)u$, which lies on the fixed line $x_0+\mathbb{R}u$. In the notation introduced before the proposition, $\dot{\bs}(t)=a_{\perp}(t)/\|v(t)\|$, hence $a_{\perp}(t)\equiv 0$ implies $\bs(t)$ is constant wherever $\|v(t)\|>0$, which is the same conclusion.

Finally, when $a\equiv 0$ we have $\dot v=0$ so $v(t)\equiv v_0$, and therefore
\begin{equation}
x(t)=x_0+t\,v_0,
\end{equation}
which is straight motion at constant speed, as in the first–order Benamou–Brenier setting.

\subsection{Proof of Theorem~\ref{cor:dist-oat}}\label{appedix:corollary1}

Before proving Theorem~\ref{cor:dist-oat}, we first consider straightening a single trajectory by solving an acceleration minimization problem, which leads to the following theorem.
\begin{theorem}[\bf Straightening a single trajectory via acceleration minimization]\label{thm:optimal-bendness-curve}
Let $(x_0,v_0)$ and $(x_1,v_1)$ be two points in a product space $\X\times\V$, where $\X$ denotes a sample space and $\V$ denotes a velocity space.
Among all twice differentiable trajectories $x(t):[0,1]\to\R^d$ taking them as their endpoints, the acceleration minimization problem, i.e.,  
\begin{equation}\label{eq:oat-singlepath}
\min_{a} \frac{1}{2}\int_0^1 \|a(t)\|^2\,\mathrm{d}t,\qquad\text{s.t.}~v_0+\int_{0}^1 a(t)dt=v_1,~\text{and}~x_0+\int_{0}^1\int_{0}^{t}a(s)\mathrm{d}s\mathrm{d}t=x_1,
\end{equation}
that admits a unique coordinate-wise cubic interpolation minimizer.
Moreover, 
\begin{enumerate}
\item Solving the problem leads to a straight trajectory with constant velocity (i.e., $a\equiv 0$) is feasible if and only if $v_0=v_1$ and they are collinear with $x_1-x_0$. 
\item Solving the problem leads to a straight trajectory (i.e., $s(t)$ is constant and $a_{\perp}(t)\equiv 0$) if and only if $v_0$ and $v_1$ are collinear with $x_1-x_0$.
\item Otherwise, it bends exactly to match the endpoints’ orthogonal components.
\end{enumerate}
\end{theorem}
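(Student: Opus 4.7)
The plan is to reduce Theorem~\ref{thm:optimal-bendness-curve} to the Euler--Lagrange analysis of a standard fixed-endpoint, fixed-slope variational problem with integrand $\tfrac{1}{2}\|\ddot x\|^2$, and then read off the three structural statements from the explicit Hermite cubic minimizer. Because the Lagrangian depends only on $\ddot x$, the Euler--Lagrange equation decouples coordinate-wise to $x^{(4)}(t)=0$, so any extremal is a vector cubic $x(t)=\alpha t^3+\beta t^2+\gamma t+\delta$. Imposing $x(0)=x_0$, $x(1)=x_1$, $\dot x(0)=v_0$, $\dot x(1)=v_1$ gives the unique Hermite solution
\begin{equation*}
\delta=x_0,\quad \gamma=v_0,\quad \alpha=2(x_0-x_1)+v_0+v_1,\quad \beta=3(x_1-x_0)-2v_0-v_1,
\end{equation*}
so $\ddot x(t)=6\alpha t+2\beta$ is affine in $t$. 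Global optimality is then obtained from strict convexity of $a\mapsto\int_0^1\tfrac12\|a(t)\|^2\,\mathrm{d}t$ on the affine set of admissible accelerations (those whose two iterated integrals starting from $(x_0,v_0)$ produce $(x_1,v_1)$); this affine set is non-empty, so the unique critical point is the unique minimizer.

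Next I would derive each of the three items from this closed form. For item~(1), $a\equiv 0$ is feasible iff $\alpha=\beta=0$; substituting the formulas above and solving the resulting $2\times 2$ system yields $v_0=v_1=x_1-x_0$, so in particular $v_0=v_1$ and both are collinear with $x_1-x_0$, and conversely under these conditions $x(t)=x_0+t(x_1-x_0)$ is admissible with zero cost. For item~(2), the cubic image lies in a line through $x_0$ iff $\alpha,\beta,\gamma$ all lie in a common one-dimensional subspace; since $x_1$ lies on that image this subspace must be $\mathrm{span}\{x_1-x_0\}$ (if $x_0\neq x_1$), and then $\gamma=v_0$ forces $v_0\parallel(x_1-x_0)$, after which the explicit formulas for $\alpha,\beta$ force $v_1\parallel(x_1-x_0)$. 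The converse is immediate. In the language preceding the proof, this is precisely the condition $a_\perp\equiv 0$ of Proposition~\ref{prop:straightness}.

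For item~(3), set $u=(x_1-x_0)/\|x_1-x_0\|$ and decompose every vector $w$ as $w=w_\parallel+w_\perp$ with respect to $u$. Since the Hermite formulas are linear in the data $(x_0,x_1,v_0,v_1)$, the parallel and orthogonal components of $x(t)$ are themselves Hermite cubics with the corresponding projected endpoint data: the parallel cubic has image on the line $x_0+\mathbb{R}u$, while the orthogonal cubic interpolates the endpoint values $0$ and $0$ with endpoint slopes $v_0^\perp$ and $v_1^\perp$. The acceleration decomposes the same way, $a_\perp(t)=6\alpha^\perp t+2\beta^\perp$, which is generically nonzero and vanishes identically iff $v_0^\perp=v_1^\perp=0$, recovering item~(2). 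Thus the optimal curve bends exactly by the unique zero-endpoint Hermite cubic that interpolates the orthogonal endpoint velocities.

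The main obstacle is the bookkeeping in item~(3): making precise what ``bends exactly to match the endpoints' orthogonal components'' means, namely that the orthogonal sub-curve is independent of the parallel data and is determined solely by $(v_0^\perp,v_1^\perp)$, so that the bending cost depends only on $\|v_0^\perp\|^2$, $\|v_1^\perp\|^2$, and $v_0^\perp\!\cdot v_1^\perp$. Minor care is also needed for the degenerate case $x_0=x_1$ (replace $u$ by the direction of $v_0$ or any reference direction, since the argument is linear in the data) and for instants where $\|v(t)\|=0$ (where $\bs(t)$ is undefined and the identifications $a_\parallel,a_\perp$ should be interpreted via continuous extension, as in Appendix~\ref{appedix:prop1}).
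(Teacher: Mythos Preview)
Your proposal is correct and follows essentially the same route as the paper: both derive the Euler--Lagrange equation $x^{(4)}=0$ and invoke strict convexity to obtain the unique Hermite cubic, then split the curve orthogonally along $u=x_1-x_0$ to see that the perpendicular component is the Hermite cubic determined solely by $(v_0^\perp,v_1^\perp)$, from which all three items follow. Your treatment is slightly more explicit (writing out the Hermite coefficients $\alpha,\beta,\gamma,\delta$ and reading off the conditions directly), whereas the paper argues more abstractly via the orthogonal splitting of the cost, but the underlying ideas are identical.
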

\begin{proof}
Existence and uniqueness follow from strict convexity on $\mathcal H^2([0,1];\mathbb{R}^d)$ with the stated boundary constraints.\footnote{$\mathcal H$ denote the Hilbert space.} The Euler–Lagrange equation is $x^{(4)}(t)=0$ in each coordinate with four boundary conditions, hence the unique solution is a cubic polynomial in each coordinate, i.e., the cubic interpolation determined by $(x_0,v_0)$ and $(x_1,v_1)$.

For straightness when feasible, let $u:=x_1-x_0\neq 0$ and choose a unit vector $e$ with $u=\|u\|e$. Suppose $v_0=\alpha_0 e$ and $v_1=\alpha_1 e$. Decompose any admissible curve as
\begin{equation}
x(t)=x^{\parallel}(t)\,e + x^{\perp}(t) \quad \mbox{with} \quad x^{\perp}(t)\perp e.
\end{equation}
The boundary data enforce
\begin{equation}
x^{\perp}(0)=x^{\perp}(1)=0 \quad \mbox{and} \quad \dot x^{\perp}(0)=\dot x^{\perp}(1)=0.
\end{equation}
The cost splits orthogonally,
\begin{equation}
\int_0^1 \|\ddot x\|^2\,dt=\int_0^1 \|\ddot x^{\parallel}(t)\|^2\,dt+\int_0^1 \|\ddot x^{\perp}(t)\|^2\,dt,
\end{equation}
so $x^{\perp}$ solves a homogeneous strictly convex problem with these boundary data. The Euler–Lagrange equation $(x^{\perp})^{(4)}=0$ forces all cubic coefficients to vanish, hence $x^{\perp}\equiv 0$. The minimizer is therefore straight, of the form $x(t)=x_0+s(t)e$ with $v(t)=\dot s(t)e$. Wherever $\|\dot x(t)\|>0$ the direction $\bs(t)=\dot x(t)/\|\dot x(t)\|$ equals $e$ and is constant, and with
\begin{equation}
a_{\parallel}(t)=(\bs(t)\cdot a(t))\,\bs(t),\qquad a_{\perp}(t)=a(t)-a_{\parallel}(t),
\end{equation}
one has $a_{\perp}(t)\equiv 0$.

For general endpoints, choose an orthonormal basis whose first axis is $e=u/\|u\|$ if $u\neq 0$ (otherwise any orthonormal basis). In this, the functional and constraints separate across coordinates; each coordinate solves the scalar cubic interpolation with its own boundary data. The perpendicular coordinates are uniquely determined by the perpendicular parts of $(x_0,v_0)$ and $(x_1,v_1)$ and vanish exactly in the straightness–feasible case above. Thus, the minimizer bends only to the extent required by the endpoint data.

Finally, relate the acceleration objective to a straightening proxy. With $u=x_1-x_0$ and using $v=\dot x$, $a=\dot v$,
\begin{equation}
v(t)-u=\int_0^t a(s)\,\mathrm{d}s - \int_0^1 (1-s)\,a(s)\,\mathrm{d}s.
\end{equation}
Hence $v-u$ is a bounded linear image of $a$ on $L^2([0,1];\mathbb{R}^d)$. Poincaré–type estimates give constants $c_1,c_2>0$, independent of the endpoints, such that for every admissible curve
\begin{equation}
c_1 \int_0^1 \|a(t)\|^2\,\mathrm{d}t
\le
\int_0^1 \|v(t)-u\|^2\,\mathrm{d}t + \|v_1-v_0\|^2
\le
c_2 \int_0^1 \|a(t)\|^2\,\mathrm{d}t.
\end{equation}
Thus, minimizing \eqref{eq:oat-singlepath} controls and optimizes the straightening regression loss.
\end{proof}

Essentially, Theorem~\ref{cor:dist-oat} extends Theorem~\ref{thm:optimal-bendness-curve} to a distributional scenario.
By the static–dynamic equivalence for the acceleration cost at horizon $T=1$, the dynamic OAT value equals
\[
\min_{\pi\in\Pi(\mu_0,\mu_1)} \frac12\,\mathbb{E}_{(z_0,z_1)\sim\pi}\! \Big[c_{A}^2(z_0,z_1)\Big],
\]
hence there exists an optimal coupling $\pi^\ast\in\Pi(\mu_0,\mu_1)$. For each endpoint pair $(z_0,z_1)=(x_0,v_0;x_1,v_1)$ in the support of $\pi^\ast$, the single–path problem with boundary data $(x_0,v_0)\to(x_1,v_1)$ has a unique minimizer, namely the coordinatewise cubic (Y. Chen interpolation). Let
\[
\Phi_t(z_0,z_1):=\big(x_{z_0,z_1}(t),\,v_{z_0,z_1}(t)\big)\quad \mbox{for} \quad t\in[0,1].
\]
Define $\mu_t:=\Phi_{t\,\#}\pi^\ast$, we then have $\mu_0=\Phi_{0\,\#}\pi^\ast=\mu_0$ and $\mu_1=\Phi_{1\,\#}\pi^\ast=\mu_1$ by the marginal constraints on $\pi^\ast$, and by construction the family $(\mu_t)_{t\in[0,1]}$ is obtained by transporting the coupling along characteristics that solve $\dot x=v$ and $\dot v=a$. This representation satisfies the kinetic continuity equation in the distributional sense and attains the minimum action.

Finally, fix $(z_0,z_1)$ in the support of $\pi^\ast$ and consider its cubic characteristic $t\mapsto (x_{z_0,z_1}(t),v_{z_0,z_1}(t))$. By Theorem~\ref{thm:optimal-bendness-curve}, this trajectory is straight if and only if $v_0$ and $v_1$ are collinear with $u:=x_1-x_0$; in that case $\bs(t)=v(t)/\|v(t)\|$ is constant wherever $\|v(t)\|>0$ and $a_{\perp}(t)\equiv 0$. 
Otherwise, the trajectory bends exactly to match the endpoints’ orthogonal components. Since this holds for every $(z_0,z_1)$ in the support of $\pi^\ast$, the corollary follows.

\subsection{Proof of Theorem \ref{thm:flow_bound}}\label{appedix:thm2} 
Fix $t\in[0,1]$ with endpoint pair $z_0=(x_0,v_0)$ and $z_1=(x_1,v_1)$, and redefine the variables
\begin{equation}
u:=x_1-x_0,\qquad \bar v:=\frac{v_0+v_1}{2},\qquad w:=v_1-v_0,\qquad v_t:=v_\theta(x_t,t).
\end{equation}
The per–sample integrand of \eqref{eq:oat-flow} is
\begin{equation}
\ell_{\mathcal{A}}(v_t,\alpha) 
:=\alpha\Big(\Big\|\frac{v_0+v_t}{2}-u\Big\|^{2}+\Big\|\frac{v_t+v_1}{2}-u\Big\|^{2}\Big)
+(1-\alpha)\left(\|v_t-v_0\|^{2}+\|v_1-v_t\|^{2}\right).
\end{equation}
We first rewrite the two pairs of squares by completing the square. Using the identity
\begin{equation*}
\|x+a\|^2+\|x+b\|^2=\frac12\|2x+(a+b)\|^2+\frac12\|a-b\|^2,    
\end{equation*}
with $x=\frac12 v_t$ and $(a,b)=(\frac12 v_0-u,\frac12 v_1-u)$, we obtain 
\begin{equation}
\Big\|\frac{v_0+v_t}{2}-u\Big\|^{2}+\Big\|\frac{v_t+v_1}{2}-u\Big\|^{2}
=\frac{1}{2}\,\|v_t-(2u-\bar v)\|^{2}+\frac{1}{8}\,\|w\|^{2},
\end{equation}
and similarly, we have 
\begin{equation}
\|v_t-v_0\|^{2}+\|v_1-v_t\|^{2}
=2\,\|v_t-\bar v\|^{2}+\frac{1}{2}\,\|w\|^{2}.
\end{equation}
Substituting these into $\mathcal L_\alpha$ yields to 
\begin{equation}
\mathcal L_\alpha(v_t)
=\frac{\alpha}{2}\,\|v_t-(2u-\bar v)\|^{2}+2(1-\alpha)\,\|v_t-\bar v\|^{2}
+\Big(\frac{1}{2}-\frac{3}{8}\alpha\Big)\|w\|^{2}.
\end{equation}

Next, minimize over $v_t$. For $p,q>0$ and $a,b\in\mathbb{R}^d$,
\[
\min_{x}\big\{p\|x-a\|^{2}+q\|x-b\|^{2}\big\}=\frac{pq}{p+q}\,\|a-b\|^{2}.
\]
Applying this with $p=\frac{\alpha}{2}$, $q=2(1-\alpha)$, $a=2u-\bar v$, $b=\bar v$, and $\|a-b\|^{2}=4\|u-\bar v\|^{2}$, yields
\begin{equation}
\min_{v_t} \, \mathcal L_\alpha(v_t)
=\frac{8\alpha(1-\alpha)}{4-3\alpha}\,\|u-\bar v\|^{2}
+\Big(\frac{1}{2}-\frac{3}{8}\alpha\Big)\|w\|^{2}.
\end{equation}

For all $\alpha\in[0,1]$ one has $\frac{1}{12}\frac{8\alpha(1-\alpha)}{4-3\alpha}<\frac12-\frac{3}{8}\alpha$, and hence
\begin{equation}
\frac{8\alpha(1-\alpha)}{4-3\alpha}\,\|u-\bar v\|^2 +\Big(\frac12-\frac{3}{8}\alpha\Big)\,\|w\|^2 \ge\frac{1}{12}\frac{8\alpha(1-\alpha)}{4-3\alpha}\,
\big(12\|u-\bar v\|^2+\|w\|^2\big).    
\end{equation}
Equality holds if and only if $\|w\|=0$ (i.e., $v_1=v_0$). Exact equality at the pair level occurs precisely when $v_1=v_0$, and minimizing over $v_t$ yields to 
\begin{equation}
v_t^\star = \arg\min_x\Big\{\frac{\alpha}{2}\|x-(2u-\bar v)\|^2+2(1-\alpha)\|x-\bar v\|^2\Big\}
=\frac{2\alpha}{4-3\alpha}\,u+\frac{4-5\alpha}{4-3\alpha}\bar v.    
\end{equation}

The single–pair OAT cost for unit horizon is $c_{A}^{2}(z_0,z_1)=12\,\|u-\bar v\|^{2}+\|w\|^{2}$.
Therefore, we have 
\begin{equation}
\min_{v_t} \, \mathcal L_\alpha(v_t)\ \ge\ c(\alpha)\,c_{A}^{2}(z_0,z_1),
\end{equation}
with $c(\alpha):=\min \{\frac{\alpha(1-\alpha)}{6-\frac{9}{2}\alpha},\ \frac{1}{2}-\frac{3}{8}\alpha\}$, with $c(\alpha)$ attains its maximum at $\alpha=\frac{2}{3}$ when $\alpha \in [0,1]$, with $c(\frac{2}{3})=\frac{2}{27}$. Hence, with $\alpha=\frac{2}{3}$, we have $\mathcal L_{2/3}(v_t)\ \ge\ \frac{2}{27}\,c_{A}^{2}(z_0,z_1)$, $\forall~v_t$. 
Finally, average over $t\sim \text{Unif}[0,1]$ and $(z_0,z_1)\sim\pi$, then minimize over $\pi\in\Pi(\mu_0,\,\mu_1)$ and over $\theta$, to obtain $\mathcal{L}_{\mathrm{OAT}}(\mu_0,\,\mu_1)\ \ge\ \frac{2}{27}\ \mathcal{A}_{2}^{2}(\mu_0,\,\mu_1)$.

\section{Learning Algorithm}\label{app:alg}
\subsection{Derivation of~\eqref{eq:ot_oat}} 
In the context of FM, we have $\pi(z_0,z_1)=\pi(x_0,v_0,x_1,v_1)=\pi_x(x_0,x_1)\pi(v_0|x_0)\pi(v_1|x_1)$, where $\pi_x(x_0,x_1)\in\Pi(\rho_0,\rho_1)$ is the marginal coupling corresponding to sample pairs and $\pi(\cdot|x_t)=\delta_{v_{\theta}(x_t, t)}$ is the Dirac measure determined by the flow model $v_{\theta}$. 
Based on the decomposition that $\pi(z_0,z_1)=\pi_x(x_0,x_1)\delta_{v_{\theta}(x_0, 0)}(v_0)\delta_{v_{\theta}(x_1, 1)}(v_1)$, we can merely optimize $\pi_x(x_0,x_1)$ and reformulate the lower-level problem in~\eqref{eq:oat-flow} as
\begin{eqnarray*}
\begin{aligned}
&\mathop{\min}_{\pi\in\Pi(\mu_0,\mu_1)}\mathbb{E}_{(z_0, z_1) \sim \pi}\Big[c_{\mathcal{A}}^2(z_0,\,z_1)\Big]\\
=&\mathop{\min}_{\pi\in\Pi(\mu_0,\mu_1)}\mathbb{E}_{(z_0, z_1) \sim \pi}\Big[12\Big\|\frac{x_1-x_0}{T}-\frac{v_1+v_0}{2}\Big\|_2^2+\|v_1-v_0\|_2^2\Big]\\
\leq &\mathop{\min}_{\pi_x\in\Pi(\rho_0,\rho_1)}\mathbb{E}_{(x_0, x_1) \sim \pi_x}\Big[12\Big\|\frac{x_1-x_0}{T}-\underbrace{\frac{v_{\theta}(x_1,1)+v_{\theta}(x_0,0)}{2}}_{\text{Denoted as $\bar{v}_{x_0,x_1}$}}\Big\|_2^2+\|\underbrace{v_{\theta}(x_1,1)-v_{\theta}(x_0,0)}_{\text{Denoted as $\tilde{v}_{x_0,x_1}$}}\|_2^2\Big]\\
&{\Big(\text{When considering the decomposition $\pi(z_0,z_1)=\pi_x(x_0,x_1)\delta_{v_{\theta}(x_0,0)}(v_0)\delta_{v_{\theta}(x_1,1)}(v_1)$}\Big)}\\
=&\mathop{\min}_{\pi_x\in\Pi(\rho_{0},\rho_{1})}\mathbb{E}_{(x_0, x_1) \sim \pi_x}\Big[12\|x_1-x_0-\bar{v}_{x_0,x_1}\|_2^2+\|\tilde{v}_{x_0,x_1}\|_2^2\Big] \quad(\text{set $T=1$}).
\end{aligned}    
\end{eqnarray*}
The inequality in the above derivation is because we impose the decomposable structure on $\pi$, which shrinks its feasible domain from $\Pi(\mu_0,\mu_1)$ to $\Pi^\prime(\mu_0,\mu_1): = \{\pi\; |\; \pi(z_0,z_1)=\pi_x(x_0,x_1)\delta_{v_{\theta}(x_0,0)}(v_0)\delta_{v_{\theta}(x_1,1)}(v_1),\; \pi_x = \iint_{v_0,v_1} \pi, \mbox{ and } \pi \in \Pi\}$.

Suppose that we have a batch of samples with size $B$, i.e., $\{x_{1,i}\}_{i=1}^B \sim \mathcal{D}$, and a batch of noise with the same size, i.e., $\{x_{0,i}\}_{i=1}^B \sim \mathcal{N}(0, I)$.
We can get their velocities, i.e., $\{v_{0,i} \gets v_{\theta}(x_{0,i}, 0)\}_{i=1}^B$ and $\{v_{1,i} \gets v_{\theta}(x_1, 1)\}_{i=1}^B$.
The above problem can be rewritten in a discrete format:
\begin{eqnarray}\label{eq:discrete_ot}
\arg\min_{\mathbf{T}}\langle\mathbf{C},~\mathbf{T}\rangle,\quad\text{s.t.}~\mathbf{T}\mathbf{1}_B=\frac{1}{B}\mathbf{1}_B,~\mathbf{T}^{\top}\mathbf{1}_B=\frac{1}{B}\mathbf{1}_B,
\end{eqnarray}
where $\langle\cdot,\cdot\rangle$ denotes inner product, $\mathbf{T}$ is the coupling matrix, and $\mathbf{C}=[c_{ij}]\in\mathbb{R}^{B\times B}$ is the cost matrix, whose element $c_{ij}=12\|x_{1,i}-x_{0,j}-\frac{v_{0,j}+v_{1,i}}{2}\|_2^2+\|v_{1,i}-v_{0,j}\|_2^2$. 
Following the OT-CFM rationale, the problem can be cast as a linear program with computational complexity $\mathcal{O}(B^3 \log \|\mathbf C\|_\infty)$. 
The optimizer can be further approximated by adding an entropic regularizer of $\mathbf{T}$ weighted by $\epsilon$, i.e., $\epsilon \langle \mathbf{T}, \log \mathbf{T} \rangle$. This allows the problem to be solved efficiently by Sinkhorn method, whose complexity is $\mathcal{O}(B^2 \log B)$, and the exact OT result can be recovered by taking $\epsilon$ sufficiently small. We refer to \citet{peyre2019computational} for further details.


\subsection{The Scheme of Learning Algorithm}
Algorithm~\ref{alg:oat-fm} provides the algorithmic scheme of OAT-FM.
This algorithm works as the phase-2 training step in our two-phase FM paradigm.

\begin{algorithm}[htb!]
\caption{OAT-FM}\label{alg:oat-fm}
\begin{algorithmic}[1]
\Require A phase-1 pre-trained model $v_{\theta_0}$, dataset $\mathcal{D}$, EMA decay rate $\lambda$, batch size $B$
\Ensure Refined velocity field $v_{\theta}$
\State \textbf{Initialize} $v_{\theta} \gets v_{\theta_0}$
\While{training}
    \State Sample a batch with size $B$: $\{x_{1,i}\}_{i=1}^B \sim \mathcal{D}$, $\{x_{0,i}\}_{i=1}^B \sim \mathcal{N}(0, I)$, and $t \sim \mathcal{U}[0,1]$
    \State $\{v_{0,i} \gets v_{\theta}(x_{0,i}, 0)\}_{i=1}^B$, $\{v_{1,i} \gets v_{\theta}(x_1, 1)\}_{i=1}^B$, and $\{\bar{v}_{ij}\gets v_{0,j}+v_{1,j}\}_{i,j=1}^{B}$
    \State Compute the optimal coupling matrix $\mathbf{T}^*$ by solving~\eqref{eq:discrete_ot}.
    \State Sampling $K$ pairs $(x_1, x_0) \sim \mathbf{T}^*$ and obtain the corresponding $v_{0}$ and $v_{1}$.
    \State $x_t \gets (1 - t)x_0 + tx_1$, $v_t \gets v_{\theta}(x_t, t)$, and compute $\mathcal{L}_{\text{OAT}}$ accordingly.
    \State Update model: $\theta' \gets \theta -\nabla_\theta \mathcal{L}_{\text{OAT}}$
    \State $\theta \gets \text{stopgrad}(\lambda \theta + (1-\lambda)\theta')$
\EndWhile
\end{algorithmic}
\end{algorithm}

\section{Detailed Experiment Settings}\label{appendix:exp_settings}


\begin{table}[t]
\centering
\caption{Configurations of the flow/diffusion models used in our experiments.}
\label{tab:model_config}
\tabcolsep=1pt
\small{%
\begin{tabular}{@{}lccccc@{}}
\toprule
\textbf{Model} &\textbf{Paradigm} & \textbf{Objective} & \textbf{Path / Coupling} & \textbf{Architecture} & \textbf{Space} \\ 
\midrule
\textbf{EDM}~\citep{karras2022elucidating} &Diffusion & Score Matching & VP/VE Noise Schedule & U-Net & Pixel \\
\textbf{FM}~\citep{lipman2023flow} &Flow & Flow Matching & Gaussian-Data Path & U-Net & Pixel \\ 
\textbf{I-CFM}~\citep{tong2024improving} &Flow & Conditional FM & Linear Path (Independent) & U-Net & Pixel \\
\textbf{VP-CFM}~\citep{albergo2023stochastic} & Flow & Conditional FM & Trigonometric Path  & U-Net & Pixel  \\
\textbf{SB-CFM}~\citep{tong2024improving} & Flow & Conditional FM & Brownian Bridge Path& U-Net & Pixel \\
\textbf{OT-CFM}~\citep{tong2024improving} &Flow & Conditional FM & Linear Path (OT Coupling) & U-Net & Pixel \\ 
\textbf{SiT-XL}~\citep{ma2024sit} &Flow & Velocity Matching & Linear / GVP Path & DiT & Latent \\ \bottomrule
\end{tabular}
}
\end{table}

\subsection{Conversion from VP/VE Models to Flow Matching}\label{appendix:df2flow}
Following the work in~\citep{lee2024improving}, our training process begins with a pre-trained diffusion model. 
Specifically, we use models trained with the Elucidating the Design Space of Diffusion-Based Generative Models (EDM) framework~\citep{karras2022elucidating}, which provides a unified perspective on score-based models including Variance Preserving (VP) and Variance Exploding (VE) SDEs. 
To adapt this model for flow matching, we re-parameterize its input and output to function as a velocity field predictor. 
This allows us to initialize our model with a strong, pre-trained diffusion backbone, facilitating a seamless transition to the second-phase OAT-FM training paradigm.

\textbf{EDM Preconditioning}. 
The EDM model takes a noise-corrupted input $x_\sigma = x_1 + \sigma z$ (where $z \sim \mathcal{N}(0, I)$) and a noise level $\sigma$, is trained to denoise it. 
Here, the noise at time $t$ is assumed to be isotropic Gaussian and independent of the data.
Accordingly, the forward process of EDM is Markovian, implying a smooth score function and a smooth velocity field~\citep{song2020score}.
The denoised output $D_\theta(x_\sigma, \sigma)$ is an estimate of the original data $x_1$ and is formulated using a preconditioning scheme:
\begin{equation}
    D_\theta(x_\sigma, \sigma) = c_{\text{skip}}(\sigma) x_\sigma + c_{\text{out}}(\sigma) F_\theta(c_{\text{in}}(\sigma)x_\sigma, c_{\text{noise}}(\sigma)),
\end{equation}
where the scaling factors $c_{\text{skip}}, c_{\text{out}}, c_{\text{in}}$, and the time embedding $c_{\text{noise}}$ are functions of $\sigma$ designed to improve network conditioning and training stability. Specifically, they are defined as
\begin{eqnarray*}
    c_{\text{skip}}(\sigma)= \frac{\sigma_{\text{data}}^2}{\sigma^2 + \sigma_{\text{data}}^2}, \quad c_{\text{out}}(\sigma) = \frac{\sigma \sigma_{\text{data}}}{\sqrt{\sigma^2 + \sigma_{\text{data}}^2}}, \quad
    c_{\text{in}}(\sigma) = \frac{1}{\sqrt{\sigma_{\text{data}}^2 + \sigma^2}}, \quad c_{\text{noise}}(\sigma) = \frac{1}{4}\log(\sigma),
\end{eqnarray*}
where $\sigma_{\text{data}}$ is the standard deviation of the training data.

\textbf{Flow Matching Objective}. In the flow matching formulation, we aim to learn a velocity field $v_\theta(x, t)$ that models the linear trajectory between a data sample $x_1$ and a noise sample $x_0 \sim \mathcal{N}(0, I)$. 
The path is defined as $x_t = (1-t)x_0 + t x_1$ for $t \in [0, 1]$. 
The ground truth velocity is simply $v(x_t, t) = x_1 - x_0$. 
The model is trained by minimizing the following L2 loss:
\begin{equation}
    \mathcal{L}_{\text{FM}} = \mathbb{E}_{x_0, x_1, t} \left[ \| v_\theta(x_t, t) - (x_1 - x_0) \|_2^2 \right].
\end{equation}

\textbf{Conversion to Velocity Field}. 
To convert the pre-trained denoiser $D_\theta$ into a velocity predictor $v_\theta$, we use an adapter that maps the flow matching inputs $(x_t, t)$ to the diffusion model's expected inputs $(x_\sigma, \sigma)$. This mapping is defined by:
\begin{align}
    \sigma(t) = \frac{1-t}{t}, \quad
    x_\sigma = \frac{x_t}{t} = \frac{(1-t)x_0 + t x_1}{t} = x_1 + \frac{1-t}{t} x_1 = x_1 + \sigma(t) x_0.
\end{align}
This transformation effectively converts the point $x_t$ on the linear interpolation path into a correctly scaled noisy sample $x_\sigma$ that the EDM model can process. 
We then feed $x_\sigma$ and $\sigma(t)$ into the pre-trained EDM denoiser to obtain an estimate of the clean data, $\hat{x}_1 = D_\theta(x_\sigma, \sigma(t))$.

Finally, we construct the velocity prediction $v_\theta(x_t, t)$ from this estimate $\hat{x}_1$. 
Based on the definition of $x_t$, we have $x_1 - x_0 = (x_1 - x_t)/(1-t)$. 
By substituting our estimate $\hat{x}_1$ for the true $x_1$, we obtain our velocity field parameterization:
\begin{equation}
    v_\theta(x_t, t) = \frac{\hat{x}_1 - x_t}{1-t} = \frac{D_\theta(\frac{x_t}{t}, \frac{1-t}{t})-x_t}{1-t}.
\end{equation}

\subsection{Low-Dimensional OT Benchmark}\label{sec:d2}
\textbf{Dataset}. Following the prior work in~\citep{tong2024improving}, we evaluate performance on five two-dimensional distribution mapping tasks. 
These benchmarks test the ability of a model to learn a transport map from a source distribution to the target distribution. The specific pairs are: $i)$ a standard Gaussian to a mixture of 8 Gaussians ($\mathcal{N} \to 8$gs), $ii)$ a standard Gaussian to two interleaved moons ($\mathcal{N} \to$ moons), $iii)$ a standard Gaussian to an S-shaped curve ($\mathcal{N} \to$ scurve), $iv)$ moons to 8 Gaussians (moons $\to 8$gs), and $v)$ 8 Gaussians to moons ($8$gs $\to$ moons). 

 
\textbf{Training}. For all experiments, the vector field $v_{\theta}$ is parameterized by a standard Multi-Layer Perceptron (MLP) that accepts concatenated position and time vectors as input. 
The MLP consists of 3 hidden layers, each with a width of 64 neurons, followed by a SELU activation function. 
The final layer maps the representation back to a 2-dimensional vector, representing the velocity at the given point in space-time. 
First, we train the baseline models, i.e.,  FM~\citep{lipman2023flow}, I/SB/OT-CFM~\citep{tong2024improving}, and VP-CFM~\citep{albergo2023stochastic}, for 20,000 batches (each batch contains 256 data points). 
Subsequently, each of these pre-trained models is refined using our OAT-FM for an additional 20,000 batches.
During the OAT-FM training, we employ a slowly-updating strategy. 
The OAT-FM is updated with a hard copy of the online model's weights every 500 batches. For the OAT-FM objective, we set the balancing hyperparameter $\alpha$ to 0.70.

\textbf{Evaluation}. 
We assess model performance using two key metrics. First, to measure the quality of the learned terminal distribution, we compute the 2-Wasserstein distance ($\mathcal{W}_2$) between the generated samples and the true target samples. Second, to evaluate the efficiency of the learned transport path, we use the Normalized Path Energy (NPE) defined in~\eqref{eq:evaluate}. 
An NPE value near zero indicates that the learned path is close to the dynamic optimal transport plan. 
For both metrics, we use a test set of 1,024 samples. 
Trajectories are generated by integrating the learned vector field from $t=0$ to $t=1$ using a 4th-order Runge-Kutta (RK4) solver~\citep{dormand1980family} with 101 discretization steps and absolute and relative error tolerances of $10^{-6}$. 
The path energy integral is numerically approximated using the trapezoidal rule over the computed trajectory points.

\textbf{Comparisons in one-step generation.} Table~\ref{tab:ot_benchmark_one_step} presents our experiments in the one-step generation setting. We include OFM~\citep{kornilov2024optimal} as an additional baseline. 
Here, we only compare different methods on data fitting (2-Wasserstein) because all the methods apply one-step generation.
It is worth noting that OFM requires an ICNN architecture, whereas our method and the other baselines utilize the same MLP backbone. 
The results demonstrate the effectiveness of our method.
\begin{table}[t]

\centering

\caption{A comparative analysis of \textbf{\textcolor{red}{one-step}} generation performance, evaluating data fitting (2-Wasserstein) and optimal transport approximation (normalized path energy). We run each task in five trials and record the average performance and standard deviation.}

\label{tab:ot_benchmark_one_step}

\tabcolsep=2pt

\small{


\begin{tabular}{l c c c c c}

\toprule

Task & \multicolumn{1}{c}{$\mathcal{N}\to$8gs} & \multicolumn{1}{c}{8gs$\to$moons} & \multicolumn{1}{c}{$\mathcal{N}\to$moons} & \multicolumn{1}{c}{$\mathcal{N}\to$scurve} & \multicolumn{1}{c}{moons$\to$8gs} \\

\cmidrule(lr){2-2} \cmidrule(lr){3-3} \cmidrule(lr){4-4} \cmidrule(lr){5-5} \cmidrule(lr){6-6}

Method & $\mathcal{W}_2^2\downarrow$ & $\mathcal{W}_2^2\downarrow$ & $\mathcal{W}_2^2\downarrow$ & $\mathcal{W}_2^2\downarrow$ & $\mathcal{W}_2^2\downarrow$ \\

\midrule

OFM & $\text{0.71}_{\pm\text{0.27}}$ & $\text{0.22}_{\pm\text{0.02}}$ & $\text{0.22}_{\pm\text{0.01}}$ & $\text{1.99}_{\pm\text{0.25}}$ & $\text{0.46}_{\pm\text{0.11}}$ \\

\midrule

FM & $\text{21.50}_{\pm\text{0.07}}$ & $\text{4.05}_{\pm\text{0.01}}$ & $\text{8.08}_{\pm\text{0.05}}$ & $\text{79.01}_{\pm\text{1.18}}$ & $\text{13.48}_{\pm\text{0.15}}$ \\

\rowcolor{lightpink} 

+{\textcolor{purple}{OAT-FM}} 

& $\textbf{0.43}_{\pm\text{0.08}}$ & $\textbf{0.14}_{\pm\text{0.02}}$ & $\textbf{0.12}_{\pm\text{0.02}}$ & $\textbf{1.41}_{\pm\text{0.23}}$ & $\textbf{0.32}_{\pm\text{0.07}}$ \\

\midrule

I-CFM & $\text{23.46}_{\pm\text{0.15}}$ & $\text{4.47}_{\pm\text{0.08}}$  & $\text{7.86}_{\pm\text{0.03}}$ & $\text{79.31}_{\pm\text{1.41}}$ &  $\text{14.55}_{\pm\text{0.28}}$ \\

\rowcolor{lightpink} 

+\textcolor{purple}{OAT-FM} 

& $\textbf{0.42}_{\pm\text{0.10}}$ & $\textbf{0.17}_{\pm\text{0.02}}$ & $\textbf{0.12}_{\pm\text{0.01}}$ & $\textbf{1.41}_{\pm\text{0.26}}$ & $\textbf{0.51}_{\pm\text{0.13}}$ \\

\midrule

VP-CFM & $\text{12.17}_{\pm\text{0.29}}$ & $\text{8.46}_{\pm\text{0.25}}$ & $\text{3.70}_{\pm\text{0.05}}$ & $\text{47.73}_{\pm\text{1.94}}$ &  $\text{7.51}_{\pm\text{0.21}}$ \\

\rowcolor{lightpink} 

+{\textcolor{purple}{OAT-FM}} 

& $\textbf{0.45}_{\pm\text{0.11}}$ & $\textbf{0.20}_{\pm\text{0.03}}$ & $\textbf{0.12}_{\pm\text{0.01}}$ & $\textbf{1.35}_{\pm\text{0.23}}$ & $\textbf{0.36}_{\pm\text{0.07}}$ \\

\midrule

SB-CFM & $\text{0.63}_{\pm\text{0.09}}$ & $\text{0.19}_{\pm\text{0.04}}$ & $\text{0.20}_{\pm\text{0.04}}$ & $\text{1.91}_{\pm\text{0.35}}$ &  $\text{0.35}_{\pm\text{0.05}}$ \\

\rowcolor{lightpink} 

+{\textcolor{purple}{OAT-FM}} 

& $\textbf{0.42}_{\pm\text{0.10}}$ & $\textbf{0.13}_{\pm\text{0.01}}$ & $\textbf{0.10}_{\pm\text{0.01}}$ & $\textbf{1.22}_{\pm\text{0.26}}$ & $\textbf{0.33}_{\pm\text{0.07}}$ \\

\midrule

OT-CFM & $\text{0.48}_{\pm\text{0.07}}$ & 

$\text{0.19}_{\pm\text{0.05}}$  & $\textbf{0.12}_{\pm\text{0.01}}$ & $\text{1.64}_{\pm\text{0.22}}$ &  $\text{0.35}_{\pm\text{0.08}}$ \\

\rowcolor{lightpink} 

+\textcolor{purple}{OAT-FM} 

& $\textbf{0.40}_{\pm\text{0.08}}$ & $\textbf{0.13}_{\pm\text{0.00}}$ & $\textbf{0.12}_{\pm\text{0.03}}$ & $\textbf{1.23}_{\pm\text{0.19}}$ & $\textbf{0.30}_{\pm\text{0.06}}$ \\

\bottomrule

\end{tabular}

}

\end{table}

\subsection{CIFAR-10 32$\times$32}
\subsubsection{Basic Settings}
\textbf{Dataset}.
We evaluate our model on the CIFAR-10 dataset~\citep{krizhevsky2009learning}, a widely-used benchmark for image generation. The dataset consists of 60,000 color images with size 32$\times$32 in 10 classes, partitioned into 50,000 training images and 10,000 test images.

\textbf{Training}.
We refine publicly available FM, I-CFM, and OT-CFM models from \url{https://github.com/atong01/conditional-flow-matching/tree/main/examples/images/cifar10} that were pre-trained for 400K iterations.
The neural network is a U-Net architecture~\citep{Ho2020Denoising} with an implementation adapted from the guided-diffusion repository in \url{https://github.com/openai/guided-diffusion}. 
The U-Net has a base of 128 channels, channel multipliers of $[1, 2, 2, 2]$, two residual blocks per resolution, and applies four-head self-attention at the 16$\times$16 resolution.
The boundary velocity fields required for the OAT-FM loss are estimated using a target network, which is an exponential moving average (EMA) of the online model's weights with a decay of 0.9999 (See Algorithm~\ref{alg:oat-fm}). 
For the OAT-FM objective, we set the balancing hyperparameter $\alpha$ to 0.75.
We use the Adam optimizer with a learning rate of $2\times10^{-4}$, and apply gradient clipping with a maximum norm of 1.0. 
We use OAT-FM to train these three models with an additional 1K iterations. 
For EDM, we downloaded the checkpoints from \url{https://drive.google.com/drive/folders/18dWE-LiodXdCG0RDNegySzRnyRdcwamW}. This model utilizes the DDPM++ architecture (SongUNet) from the work in~\citep{Song2021ScoreBased}, which is a U-Net composed of residual blocks, self-attention, and positional timestep embeddings. 
Following the methodology in~\citep{lee2024improving} to convert a pre-trained score model into a flow model (as described in Appendix~\ref{appendix:df2flow}), the network is wrapped in a velocity-prediction head.
This wrapper adapts the model's output to predict a velocity field, making it directly compatible with our flow matching objective.
Using OAT-FM, it is trained for an additional 12K iterations in EMA strategy, with a learning rate warmup over the first 1K iterations from 3e-5 to 3e-4. 

\textbf{Evaluation}. 
In the inference phase, We employ the adaptive Dopri5 solver for FM, I-CFM, and OT-CFM, and the Heun solver with 35 steps for EDM. 
For each model, we report image quality measured by the Fr{\'e}chet Inception Distance (FID)~\citep{heusel2017gans}, along with the number of training iterations (\#Iter.) and the number of function evaluations (NFE) during inference.

\subsubsection{More Experiment Results}
\textbf{Additional Results wth Euler Solvers}. In addition, we also present the results of FM, I-CFM, and OT-CFM using the Euler solver with 100 steps in Table~\ref{tb:cifar10_euler}.

\begin{table}[t]
    \centering
    \caption{Comparisons of various methods in unconditional CIFAR-10 image generation. 
    In the column ``\#Batch'', the number of training batches of each baseline method is in black, while that of OAT-FM is in \textcolor{purple}{purple}. The unit ``K'' means 1,000 batches (each batch contains 128 samples). All results are obtained using the Euler solver.}
    \label{tb:cifar10_euler}
    \small
    \begin{tabular}{lrcc}
        \toprule
        Method & \#Batch & NFE$\downarrow$ & FID$\downarrow$ \\
        \midrule
        FM~\citep{lipman2023flow} & 400K & 100 (Euler) & 4.600 \\
        \rowcolor{lightpink}
        FM + \textcolor{purple}{OAT-FM} & \textcolor{purple}{+1K} & 100 (Euler) & \textbf{3.917} \\
        \midrule
        I-CFM~\citep{tong2024improving} & 400K & 100 (Euler) & 4.404 \\
        \rowcolor{lightpink}
        I-CFM + \textcolor{purple}{OAT-FM} & \textcolor{purple}{+1K} & 100 (Euler) & \textbf{3.784 }\\
        \midrule
        OT-CFM~\citep{tong2024improving} & 400K & 100 (Euler) & 4.492 \\
        \rowcolor{lightpink}
        OT-CFM + \textcolor{purple}{OAT-FM} & \textcolor{purple}{+1K} & 100 (Euler)  & \textbf{3.734} \\
        \bottomrule
    \end{tabular}
\end{table}

\begin{figure}[t]
    \centering  
    \includegraphics[width=14cm]{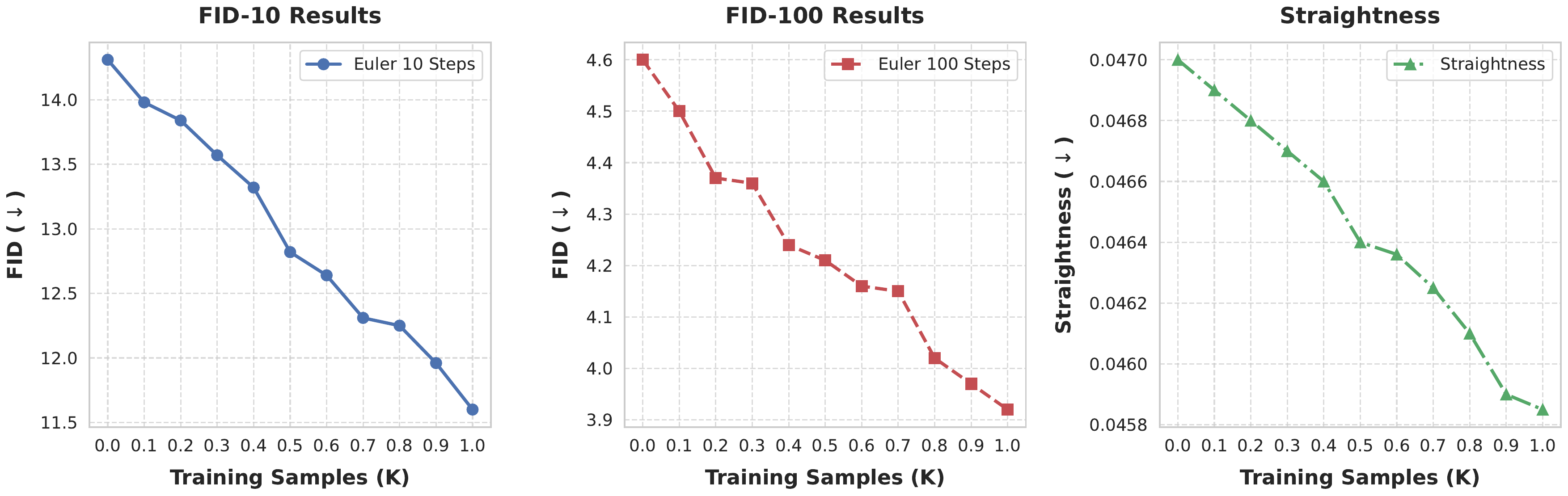}
    \caption{Stability analysis of OAT-FM fine-tuning on CIFAR-10. Starting from a pre-trained FM model (400K batches), we refine the model using OAT-FM for an additional 1K training samples, recording metrics every 0.1K samples. We track generation quality (FID with 10 and 100 Euler steps) and flow straightness.}  
    \label{fig:fid_straightness}
\end{figure}

\textbf{Stability Analysis}. 
To investigate the stability of our method and verify that the OAT-FM refinement does not lead to deterioration of the transport map or distribution drift over time, we conducted a fine-grained quantitative analysis on the CIFAR-10 dataset.
We initialize our training from the pre-trained FM model~\citep{lipman2023flow} and fine-tune it with OAT-FM on an additional 1K training samples, recording performance metrics per 0.1K samples.
As shown in Figure~\ref{fig:fid_straightness}, we track three key metrics: generation quality using an Euler solver with 10 steps (FID-10) and 100 steps (FID-100), and the flow straightness score~\citep{liu2022rectified,lee2024improving}.\footnote{The straightness score is computed with 100 integration steps by measuring the mean squared error between the actual velocities along the ODE trajectory and the constant velocity from the initial to the final state, with lower values indicating straighter flows.}
We observe a consistent, monotonic improvement across all three metrics as the number of OAT-FM training samples increases. 
This validates the stability of the proposed two-phase training paradigm.

\textbf{Sensitivity to Phase 1 pre-trained model}. 
As presented in Table~\ref{tab:sensitivity_analysis}, we evaluated the efficacy of OAT-FM when applied to pre-trained models at different stages of convergence. 
Specifically, we utilized checkpoints from FM, I-CFM, and OT-CFM trained for 100K, 200K, 300K, and 400K batches on CIFAR-10. 
OAT-FM (applied for only 1K batches) consistently improves the generation quality (FID) and reduces the Number of Function Evaluations (NFE) across all initialization points. 
Notably, even for less-converged models (e.g., FM at 100K batches), OAT-FM successfully reduces the FID from 6.11 to 5.60, demonstrating that our method does not require a near-perfect velocity field to yield benefits. 
This indicates that OAT-FM is robust to the quality of the initial velocity estimates and avoids catastrophic performance degradation even when the Phase 1 pre-trained model is suboptimal. 
Furthermore, our method demonstrates significant training efficiency. 
For instance, the FM model trained for 300K batches with OAT-FM refinement matches the performance of the stronger OT-CFM baseline trained for the whole 400K batches (FID 3.63 vs. 3.64), and the I-CFM model at 300K with refinement (FID 3.44) outperforms the fully converged OT-CFM baseline (FID 3.44 vs. 3.64).

\begin{table}[t]
    \centering
    \caption{Sensitivity analysis of OAT-FM to Phase 1 model quality on CIFAR-10. We evaluate the efficacy of OAT-FM fine-tuning (for only \textcolor{purple}{1K} batches) applied to FM, I-CFM, and OT-CFM checkpoints trained for varying durations (100K to 400K batches). }
    \label{tab:sensitivity_analysis}
    \small{
    \begin{tabular}{lcccccccc}
        \toprule
        & \multicolumn{2}{c}{100K Batches} & \multicolumn{2}{c}{200K Batches} & \multicolumn{2}{c}{300K Batches} & \multicolumn{2}{c}{400K Batches} \\
        \cmidrule(lr){2-3} \cmidrule(lr){4-5} \cmidrule(lr){6-7} \cmidrule(lr){8-9}
        Method & NFE$\downarrow$ & FID$\downarrow$ & NFE$\downarrow$ & FID$\downarrow$ & NFE$\downarrow$ & FID$\downarrow$ & NFE$\downarrow$ & FID$\downarrow$ \\
        \midrule
        FM & 140 & 6.11 & 140 & 4.26 & 143 & 3.88 & 147 & 3.71 \\
        \rowcolor{lightpink}
        + \textcolor{purple}{OAT-FM (1K)} & \textbf{135} & \textbf{5.60} & \textbf{132} & \textbf{3.96} & \textbf{134} & \textbf{3.63} & \textbf{135} & \textbf{3.54} \\
        \midrule
        I-CFM & 140 & 5.97 & 140 & 4.13 & 140 & 3.81 & 149 & 3.67 \\
        \rowcolor{lightpink}
        + \textcolor{purple}{OAT-FM (1K)} & \textbf{131} & \textbf{5.60} & \textbf{137} & \textbf{3.95} & \textbf{134} & \textbf{3.44} & \textbf{138} & \textbf{3.48} \\
        \midrule
        OT-CFM & 138 & 6.23 & 133 & 4.40 & 134 & 3.93 & 132 & 3.64 \\
        \rowcolor{lightpink}
        + \textcolor{purple}{OAT-FM (1K)} & \textbf{128} & \textbf{6.02} & \textbf{126} & \textbf{4.18} & \textbf{128} & \textbf{3.71} & \textbf{126} & \textbf{3.46} \\
        \bottomrule
    \end{tabular}
    }
\end{table}

\subsection{ImageNet 256$\times$256}
\subsubsection{Basic Settings}
\textbf{Dataset}. 
We extend our evaluation to class-conditional generation on the ImageNet 256$\times$256 benchmark~\citep{deng2009imagenet}. 
This dataset, a standard for large-scale image generation, consists of approximately 1.28 million training images, categorized into 1,000 classes.

\textbf{Training}.
Our generative model for ImageNet is a Scalable Interpolant Transformer (SiT)~\citep{ma2024sit}, which utilizes the Diffusion Transformer (DiT)~\citep{peebles2023scalable} backbone. 
We downloaded the checkpoint from~\url{https://github.com/willisma/SiT}.
The model operates in the latent space of a pre-trained variational autoencoder (VAE)~\citep{kingma2013auto}.
The network architecture is the XL/2 version of SiT. 
The interpolant framework allows for flexible choices of path-type and model prediction targets, with linear path and velocity prediction being the default configuration.
We use the AdamW optimizer with a learning rate of $1\times10^{-4}$ and no weight decay. The target network is updated via an EMA strategy of the online model's weights with a decay of 0.9999. For the OAT-FM objective, we set the balancing hyperparameter $\alpha$ to 0.80.
When applying Algorithm~\ref{alg:oat-fm}, to prevent cross-class interference within mini-batches, our sampling strategy assigns a unique class to each GPU for every training iteration. 
This ensures that each local batch consists solely of images from a single class. 
To incorporate Classifier-Free Guidance (CFG), we follow the training protocol of SiT-XL and randomly drop class labels with a probability of 0.1. 
This procedure partitions each mini-batch into two subsets: a conditional (labeled) group and an unconditional (unlabeled) group. 
Our method computes the OAT plan and performs the corresponding sample pairing independently within each subset. 
Finally, the paired samples from both groups are combined to compute the training loss. The pre-trained SiT-XL is trained by OAT-FM with additional 48K iterations or 5 epochs.
Training performance metrics (including memory and training time) are detailed in Table~\ref{tab:training_performance}.

\begin{table}[t]
\centering
\caption{Training efficiency and resource consumption of SiT-XL with and without OAT-FM across different GPU configurations.}
\label{tab:training_performance}
\small{%
\begin{tabular}{lccccc}
\toprule
\multirow{2}{*}{\textbf{Model}} & \multirow{2}{*}{\textbf{GPU}} & \multirow{2}{*}{\textbf{Batchsize}} & \textbf{Peak Allocated /} & \multicolumn{2}{c}{\textbf{Wall-clock Training Time}} \\
\cmidrule(lr){5-6}
 &  &  & \textbf{Reserved Memory} & \textbf{Speed} & \textbf{5 epochs} \\
\midrule
SiT-XL & A6000 $\times$ 8 & $128 \times 8$ & 35.57 GB / 44.61 GB & 276 samples/s & $\sim$6.1 hours \\
$+$ \textcolor{purple}{OAT-FM} & A6000 $\times$ 8 & $128 \times 8$ & 35.61 GB / 44.62 GB & 225 samples/s & $\sim$7.5 hours \\
\midrule
SiT-XL & A100 $\times$ 8 & $128 \times 8$ & 35.57 GB / 44.49 GB & 584 samples/s & $\sim$2.9 hours \\
$+$ \textcolor{purple}{OAT-FM} & A100 $\times$ 8 & $128 \times 8$ & 35.61 GB / 44.49 GB & 440 samples/s & $\sim$3.8 hours \\
\midrule
SiT-XL & A100 $\times$ 8 & $256 \times 8$ & 55.67 GB / 72.61 GB & 614 samples/s & $\sim$2.7 hours \\
$+$ \textcolor{purple}{OAT-FM} & A100 $\times$ 8 & $256 \times 8$ & 55.91 GB / 72.83 GB & 452 samples/s & $\sim$3.7 hours \\
\bottomrule
\end{tabular}%
}
\end{table}

\begin{table}[t]
\centering
\caption{Comparisons of SiT-XL with and without OAT-FM across different CFG scales. 
In the column ``\#Epochs'', the number of training epochs of each baseline method is in black, while that of OAT-FM is in \textcolor{purple}{purple}.}
\label{tab:sit_cfg_exp}
\tabcolsep=5pt
\small{
  \begin{tabular}{lrccccc}
    \toprule
    Method & \#Epochs & FID$\downarrow$ & sFID$\downarrow$ & IS$\uparrow$ & P$\uparrow$ & R$\uparrow$ \\
    \midrule
    SiT-XL$_{\text{CFG=1.0, Sampler=ODE}}$  & 1,400 & 9.40 & \textbf{6.39} & 125.2 & \textbf{0.67} & \textbf{0.67} \\
    \rowcolor{lightpink} SiT-XL$_{\text{CFG=1.0, Sampler=ODE}}$ + \textcolor{purple}{OAT-FM} & \textcolor{purple}{+5} & \textbf{9.18} & 6.42 & \textbf{128.5} & \textbf{0.67} & \textbf{0.67} \\
    \midrule
    SiT-XL$_{\text{CFG=1.5, Sampler=ODE}}$  & 1,400 & 2.11 & \textbf{4.62} & 256.0 & \textbf{0.81} & \textbf{0.61} \\
    \rowcolor{lightpink}
    SiT-XL$_{\text{CFG=1.5, Sampler=ODE}}$ + \textcolor{purple}{OAT-FM} & \textcolor{purple}{+5} & \textbf{2.05} & \textbf{4.62} & \textbf{259.4} & 0.80 & \textbf{0.61} \\
    \midrule
    SiT-XL$_{\text{CFG=2.0, Sampler=ODE}}$  & 1,400 & 3.89 & 5.02 & 342.5 & \textbf{0.87} & \textbf{0.54} \\
    \rowcolor{lightpink}
    SiT-XL$_{\text{CFG=2.0, Sampler=ODE}}$ + \textcolor{purple}{OAT-FM} & \textcolor{purple}{+5} & \textbf{3.74} & \textbf{4.84} & \textbf{346.6} & 0.86 & \textbf{0.54} \\
    \midrule
    SiT-XL$_{\text{CFG=2.5, Sampler=ODE}}$  & 1,400 & 6.91 & 6.42 & 391.5 & \textbf{0.89} & 0.47 \\
    \rowcolor{lightpink}
    SiT-XL$_{\text{CFG=2.5, Sampler=ODE}}$ + \textcolor{purple}{OAT-FM} & \textcolor{purple}{+5} & \textbf{6.57} & \textbf{5.98} & \textbf{394.8} & \textbf{0.89} & \textbf{0.49} \\
    \midrule
    SiT-XL$_{\text{CFG=3.0, Sampler=ODE}}$  & 1,400 & 9.31 & 8.10 & 419.2 & \textbf{0.90} & 0.41 \\
    \rowcolor{lightpink}
    SiT-XL$_{\text{CFG=3.0, Sampler=ODE}}$ + \textcolor{purple}{OAT-FM} & \textcolor{purple}{+5} & \textbf{8.87} & \textbf{7.41} & \textbf{421.9} & \textbf{0.90} & \textbf{0.44} \\
    \midrule
    SiT-XL$_{\text{CFG=3.5, Sampler=ODE}}$  & 1,400 & 11.14 & 9.80 & 435.8 & \textbf{0.91} & 0.37 \\
    \rowcolor{lightpink}
    SiT-XL$_{\text{CFG=3.5, Sampler=ODE}}$ + \textcolor{purple}{OAT-FM} & \textcolor{purple}{+5} & \textbf{10.55} & \textbf{8.84} & \textbf{437.5} & 0.90 & \textbf{0.39} \\
    \midrule
    SiT-XL$_{\text{CFG=4.0, Sampler=ODE}}$  & 1,400 & 12.50 & 11.30 & 444.8 & \textbf{0.91} & 0.35 \\
    \rowcolor{lightpink}
    SiT-XL$_{\text{CFG=4.0, Sampler=ODE}}$ + \textcolor{purple}{OAT-FM} & \textcolor{purple}{+5} & \textbf{11.72} & \textbf{10.07} & \textbf{449.1} & 0.90 & \textbf{0.37} \\
    \bottomrule
  \end{tabular}
}
\end{table}

\textbf{Evaluations}.
We employ both ODE and SDE samplers for evaluations. For ODE-based sampling, we utilize the adaptive step-size Dopri5 solver~\citep{dormand1980family}, configured with an absolute tolerance of $1 \times 10^{-6}$ and a relative tolerance of $1 \times 10^{-3}$. 
For SDE-based sampling, we use a fixed-step Euler-Maruyama solver with 250 steps. 
In both settings, we leverage classifier-free guidance to improve sample quality.

\subsubsection{More Experiment Results}\label{app:visual}
The results achieved under different CFG scales are shown in Table~\ref{tab:sit_cfg_exp}.
In addition, we generate images of different classes by the original SiT-XL and that refined by OAT-FM, respectively.
For each example, we start from the same noise point to generate images by the two models.
Following existing methods~\citep{ma2024sit,peebles2023scalable}, we set the CFG scale to be 4.0 for good visual effects.
Some typical results are shown in Figures~\ref{fig:cmp1} and~\ref{fig:cmp3}, demonstrating that the refinement achieved by OAT-FM indeed leads to better image quality.

\begin{figure}[t]
    \centering
    \subfigure[SiT-XL (Left) v.s. SiT-XL + OAT-FM (Right)]{
    \includegraphics[width=0.23\linewidth]{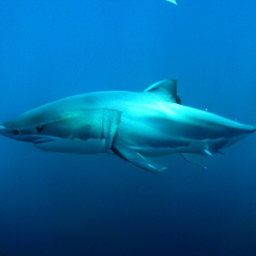}
    \includegraphics[width=0.23\linewidth]{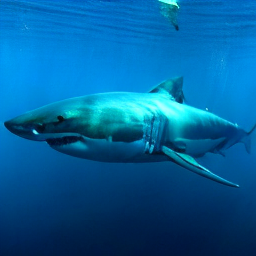}
    }
    \subfigure[SiT-XL (Left) v.s. SiT-XL + OAT-FM (Right)]{
    \includegraphics[width=0.23\linewidth]{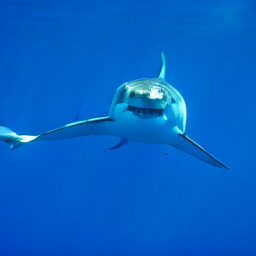}
    \includegraphics[width=0.23\linewidth]{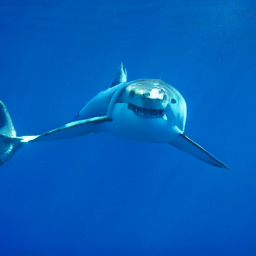}
    }
    \subfigure[SiT-XL (Left) v.s. SiT-XL + OAT-FM (Right)]{
    \includegraphics[width=0.23\linewidth]{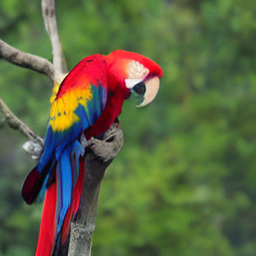}
    \includegraphics[width=0.23\linewidth]{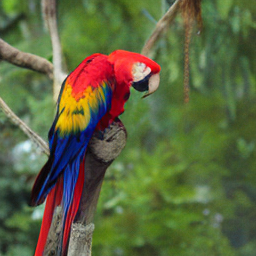}
    }
    \subfigure[SiT-XL (Left) v.s. SiT-XL + OAT-FM (Right)]{
    \includegraphics[width=0.23\linewidth]{figures/sit_88_007.png}
    \includegraphics[width=0.23\linewidth]{figures/oat_88_007.png}
    }
    \subfigure[SiT-XL (Left) v.s. SiT-XL + OAT-FM (Right)]{
    \includegraphics[width=0.23\linewidth]{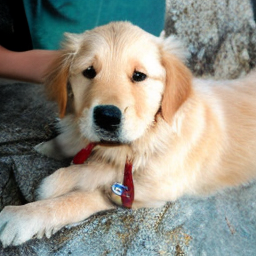}
    \includegraphics[width=0.23\linewidth]{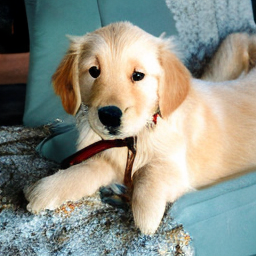}
    }
    \subfigure[SiT-XL (Left) v.s. SiT-XL + OAT-FM (Right)]{
    \includegraphics[width=0.23\linewidth]{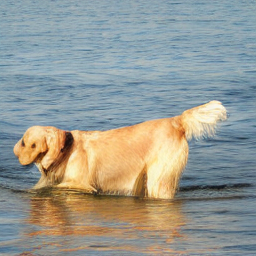}
    \includegraphics[width=0.23\linewidth]{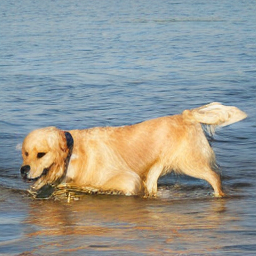}
    }
    \subfigure[SiT-XL (Left) v.s. SiT-XL + OAT-FM (Right)]{
    \includegraphics[width=0.23\linewidth]{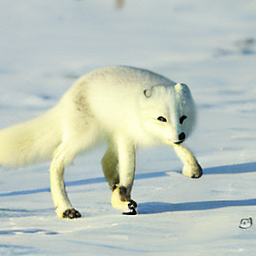}
    \includegraphics[width=0.23\linewidth]{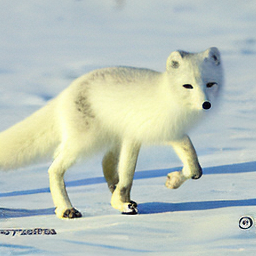}
    }
    \subfigure[SiT-XL (Left) v.s. SiT-XL + OAT-FM (Right)]{
    \includegraphics[width=0.23\linewidth]{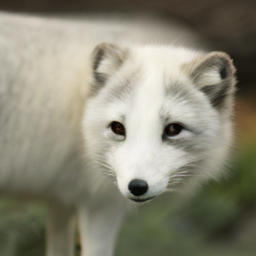}
    \includegraphics[width=0.23\linewidth]{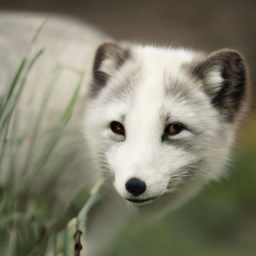}
    }
    \subfigure[SiT-XL (Left) v.s. SiT-XL + OAT-FM (Right)]{
    \includegraphics[width=0.23\linewidth]{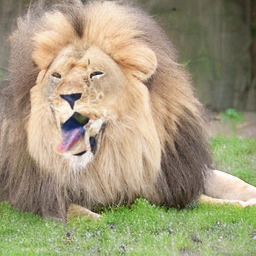}
    \includegraphics[width=0.23\linewidth]{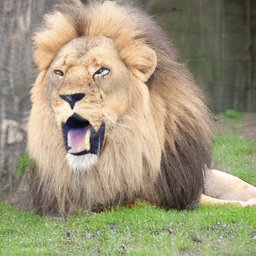}
    }
    \subfigure[SiT-XL (Left) v.s. SiT-XL + OAT-FM (Right)]{
    \includegraphics[width=0.23\linewidth]{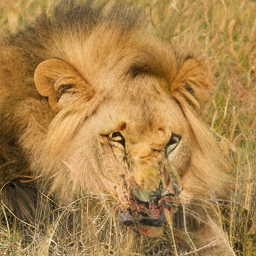}
    \includegraphics[width=0.23\linewidth]{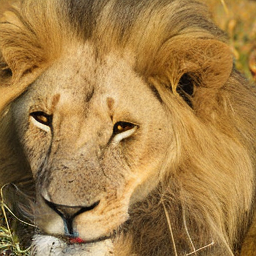}
    }
    \caption{Some generation results achieved by the two methods.}
    \label{fig:cmp1}
\end{figure}

\begin{figure}[t]
    \centering
    \subfigure[SiT-XL (Left) v.s. SiT-XL + OAT-FM (Right)]{
    \includegraphics[width=0.23\linewidth]{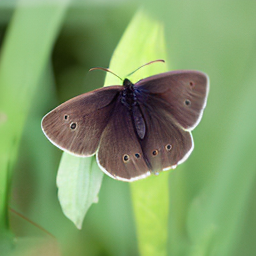}
    \includegraphics[width=0.23\linewidth]{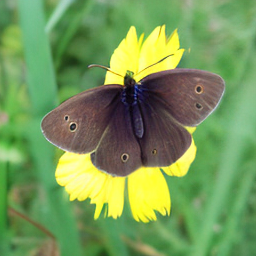}
    }
    \subfigure[SiT-XL (Left) v.s. SiT-XL + OAT-FM (Right)]{
    \includegraphics[width=0.23\linewidth]{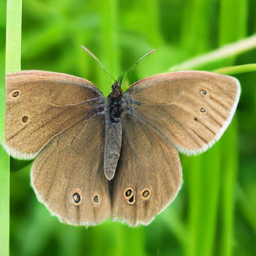}
    \includegraphics[width=0.23\linewidth]{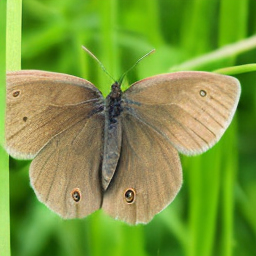}
    }
    \subfigure[SiT-XL (Left) v.s. SiT-XL + OAT-FM (Right)]{
    \includegraphics[width=0.23\linewidth]{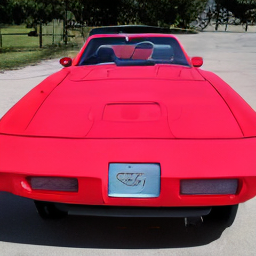}
    \includegraphics[width=0.23\linewidth]{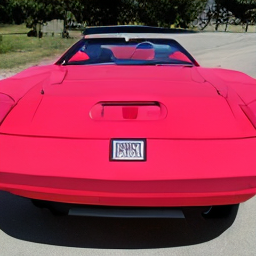}
    }
    \subfigure[SiT-XL (Left) v.s. SiT-XL + OAT-FM (Right)]{
    \includegraphics[width=0.23\linewidth]{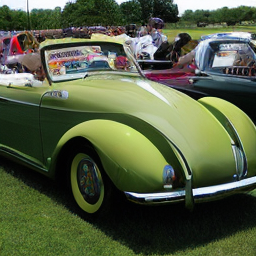}
    \includegraphics[width=0.23\linewidth]{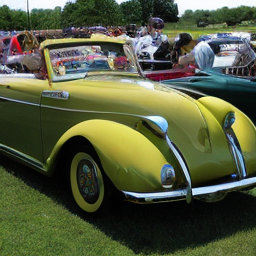}
    }
    \subfigure[SiT-XL (Left) v.s. SiT-XL + OAT-FM (Right)]{
    \includegraphics[width=0.23\linewidth]{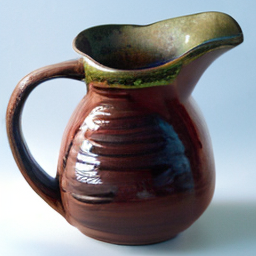}
    \includegraphics[width=0.23\linewidth]{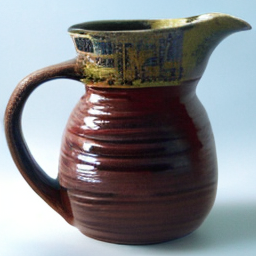}
    }
    \subfigure[SiT-XL (Left) v.s. SiT-XL + OAT-FM (Right)]{
    \includegraphics[width=0.23\linewidth]{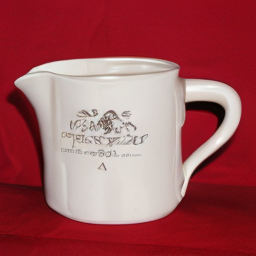}
    \includegraphics[width=0.23\linewidth]{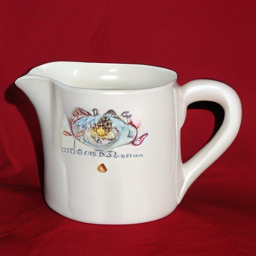}
    }
    \subfigure[SiT-XL (Left) v.s. SiT-XL + OAT-FM (Right)]{
    \includegraphics[width=0.23\linewidth]{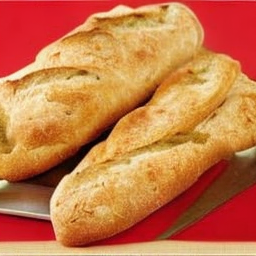}
    \includegraphics[width=0.23\linewidth]{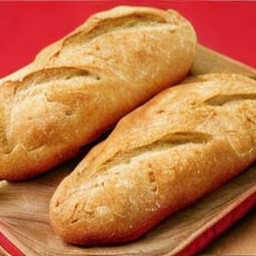}
    }
    \subfigure[SiT-XL (Left) v.s. SiT-XL + OAT-FM (Right)]{
    \includegraphics[width=0.23\linewidth]{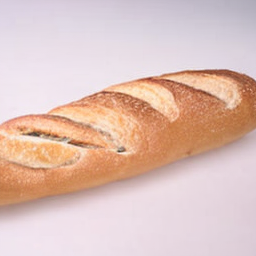}
    \includegraphics[width=0.23\linewidth]{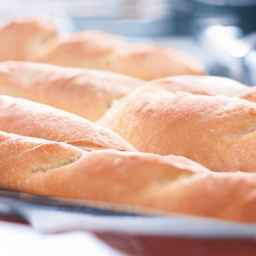}
    }
    \subfigure[SiT-XL (Left) v.s. SiT-XL + OAT-FM (Right)]{
    \includegraphics[width=0.23\linewidth]{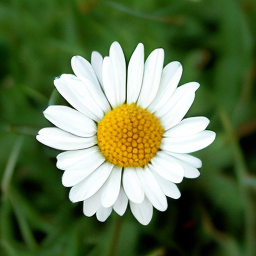}
    \includegraphics[width=0.23\linewidth]{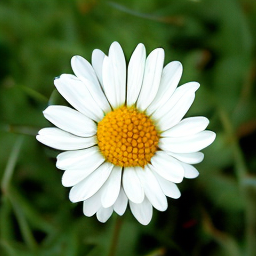}
    }
    \subfigure[SiT-XL (Left) v.s. SiT-XL + OAT-FM (Right)]{
    \includegraphics[width=0.23\linewidth]{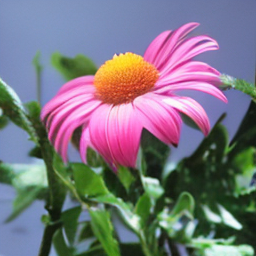}
    \includegraphics[width=0.23\linewidth]{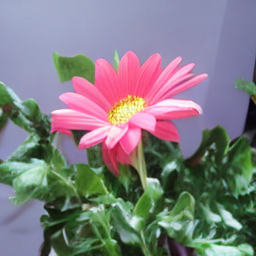}
    }
    \caption{Some generation results achieved by the two methods.}
    \label{fig:cmp2}
\end{figure}

\begin{figure}[t]
    \centering
    \subfigure[SiT-XL (Left) v.s. SiT-XL + OAT-FM (Right)]{
    \includegraphics[width=0.23\linewidth]{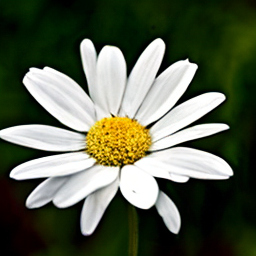}
    \includegraphics[width=0.23\linewidth]{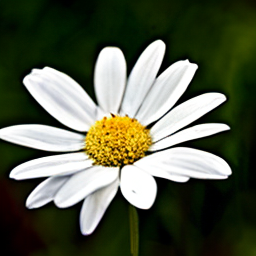}
    }
    \subfigure[SiT-XL (Left) v.s. SiT-XL + OAT-FM (Right)]{
    \includegraphics[width=0.23\linewidth]{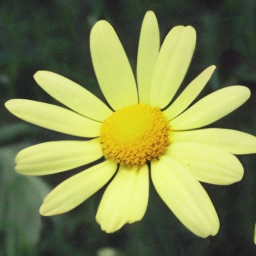}
    \includegraphics[width=0.23\linewidth]{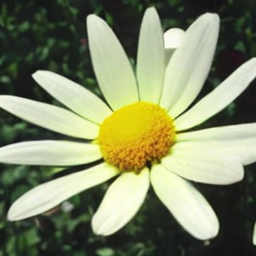}
    }
    \subfigure[SiT-XL (Left) v.s. SiT-XL + OAT-FM (Right)]{
    \includegraphics[width=0.23\linewidth]{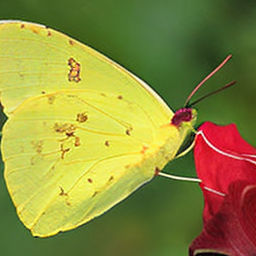}
    \includegraphics[width=0.23\linewidth]{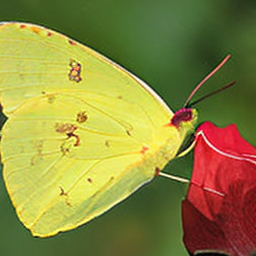}
    }
    \subfigure[SiT-XL (Left) v.s. SiT-XL + OAT-FM (Right)]{
    \includegraphics[width=0.23\linewidth]{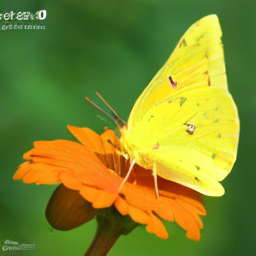}
    \includegraphics[width=0.23\linewidth]{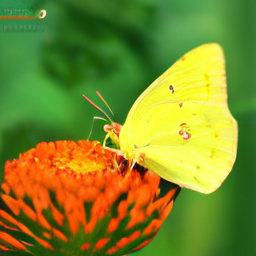}
    }
    \subfigure[SiT-XL (Left) v.s. SiT-XL + OAT-FM (Right)]{
    \includegraphics[width=0.23\linewidth]{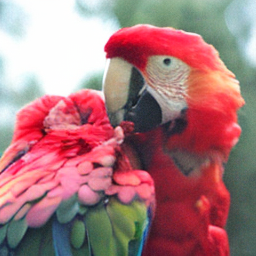}
    \includegraphics[width=0.23\linewidth]{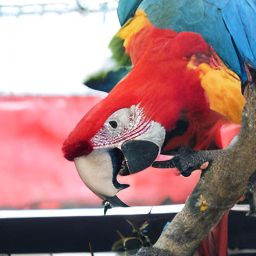}
    }
    \subfigure[SiT-XL (Left) v.s. SiT-XL + OAT-FM (Right)]{
    \includegraphics[width=0.23\linewidth]{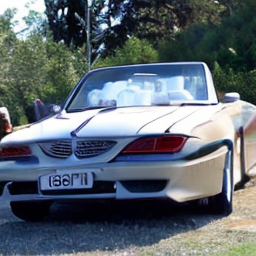}
    \includegraphics[width=0.23\linewidth]{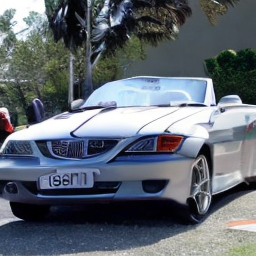}
    }
    \subfigure[SiT-XL (Left) v.s. SiT-XL + OAT-FM (Right)]{
    \includegraphics[width=0.23\linewidth]{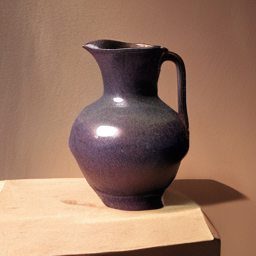}
    \includegraphics[width=0.23\linewidth]{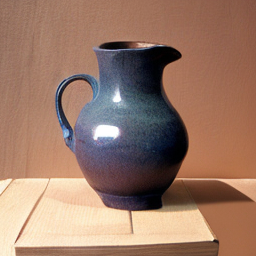}
    }
    \subfigure[SiT-XL (Left) v.s. SiT-XL + OAT-FM (Right)]{
    \includegraphics[width=0.23\linewidth]{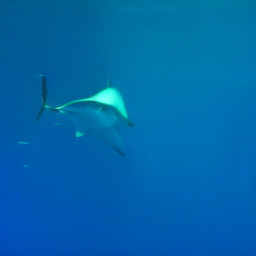}
    \includegraphics[width=0.23\linewidth]{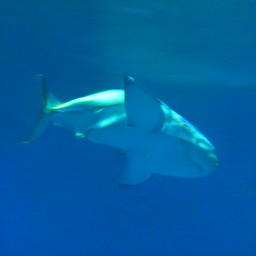}
    }
    \subfigure[SiT-XL (Left) v.s. SiT-XL + OAT-FM (Right)]{
    \includegraphics[width=0.23\linewidth]{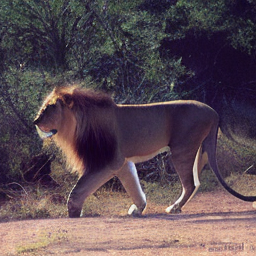}
    \includegraphics[width=0.23\linewidth]{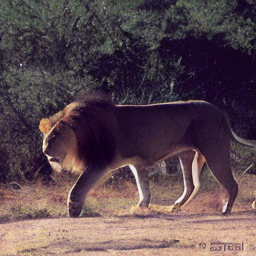}
    }
    \subfigure[SiT-XL (Left) v.s. SiT-XL + OAT-FM (Right)]{
    \includegraphics[width=0.23\linewidth]{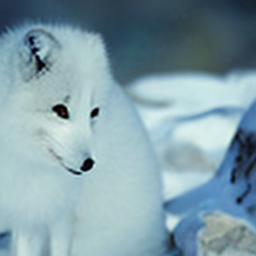}
    \includegraphics[width=0.23\linewidth]{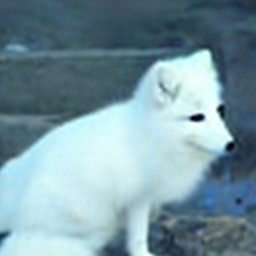}
    }
    \caption{Some generation results achieved by the two methods.}
    \label{fig:cmp3}
\end{figure}
\end{document}